\documentclass{article}



\usepackage[preprint]{}



\usepackage[utf8]{inputenc} 
\usepackage[T1]{fontenc}    
\usepackage{hyperref}       
\usepackage{url}            
\usepackage{booktabs}       
\usepackage{amsfonts}       
\usepackage{nicefrac}       
\usepackage{microtype}      
\usepackage{xcolor}         
\usepackage[ruled,vlined,algosection]{algorithm2e}
\usepackage{multirow}
\usepackage{subcaption}
\usepackage{caption}
\usepackage{graphicx}
\usepackage{amssymb}
\usepackage{amsmath}

\usepackage{tabularx}

\newcommand{\reals}{{\mbox{\bf R}}}

\newcommand{\symm}{\mbox{\bf S}}  

\newcommand{\Tr}{\mathop{\bf Tr}}
\newcommand{\diag}{\mathop{\bf diag}}

\newcommand{\card}{\mathop{\bf card}}




\newcommand{\sign}{\mathop{\bf sign}}
\newcommand{\mnorm}[1]{{\vert\kern-0.3ex\vert\kern-0.3ex\vert #1 
\vert\kern-0.3ex\vert\kern-0.3ex\vert}}

\newcommand{\argmin}{\operatornamewithlimits{argmin}}

\newcommand{\R}{{\mathbf{R}}}

\newcommand{\E}{{\mathbb{E}}}

\newcommand{\corr}{\mathop{\bf Corr}}

\newcommand{\prob}{{\mathbb{P}}}

\newcommand{\prox}{\mathbf{Prox}}

\newcommand{\braket}[1]{\langle #1 \rangle}

\newcommand{\cf}{{\it cf.}}
\newcommand{\eg}{{\it e.g.}}
\newcommand{\ie}{{\it i.e.}}

{\catcode`p =12 \catcode`t =12 \gdef\eeaa#1pt{#1}}      
\def\accentadjtext#1{\setbox0\hbox{$#1$}\kern   
                 \expandafter\eeaa\the\fontdimen1\textfont1 \ht0 }
\def\accentadjscript#1{\setbox0\hbox{$#1$}\kern 
                 \expandafter\eeaa\the\fontdimen1\scriptfont1 \ht0 }
\def\accentadjscriptscript#1{\setbox0\hbox{$#1$}\kern   
                 \expandafter\eeaa\the\fontdimen1\scriptscriptfont1 \ht0 }
\def\accentadjtextback#1{\setbox0\hbox{$#1$}\kern       
                 -\expandafter\eeaa\the\fontdimen1\textfont1 \ht0 }
\def\accentadjscriptback#1{\setbox0\hbox{$#1$}\kern     
                 -\expandafter\eeaa\the\fontdimen1\scriptfont1 \ht0 }
\def\accentadjscriptscriptback#1{\setbox0\hbox{$#1$}\kern 
                 -\expandafter\eeaa\the\fontdimen1\scriptscriptfont1 \ht0 }


\makeatletter
\def\mdots@{\mathinner.\nonscript\!.%
  \ifx\next,.\else\ifx\next;.\else\ifx\next..\else
  \nonscript\!\mathinner.\fi\fi\fi}
\let\ldots\mdots@
\let\cdots\mdots@
\let\dotso\mdots@
\let\dotsb\mdots@
\let\dotsm\mdots@
\let\dotsc\mdots@
\def\vdots{\vbox{\baselineskip2.8\p@ \lineskiplimit\z@
     \kern6\p@\hbox{.}\hbox{.}\hbox{.}\kern3\p@}}
\def\ddots{\mathinner{\mkern1mu\raise8.6\p@\vbox{\kern7\p@\hbox{.}}%
     \raise5.8\p@\hbox{.}\raise3\p@\hbox{.}\mkern1mu}}
\makeatother

\RequirePackage{amsthm}
\newtheoremstyle{descriptive}%
	{\topsep}   
	{\topsep}   
	{\rmfamily} 
	{}          
	{\bfseries} 
	{.}         
	{ }         
	{}          
\newtheoremstyle{propositional}%
	{\topsep}   
	{\topsep}   
	{\itshape}  
	{}          
	{\bfseries} 
	{.}         
	{ }         
	{}          
\theoremstyle{propositional}
\newtheorem{thm}{Theorem}[section]

\newtheorem{lem}[thm]{Lemma}
\newtheorem{cor}[thm]{Corollary}
\theoremstyle{descriptive}

\newtheorem{assumption}{Assumption}[section]

\numberwithin{equation}{section}
\bibliographystyle{apalike}

\title{Inverse Covariance and Partial Correlation Matrix Estimation via Joint Partial Regression}

%

\author{%
  Samuel Erickson \\
  Division of Decision and Control Systems \\
  KTH Royal Institute of Technology \\
  Stockholm, Sweden \\
  \texttt{samuelea@kth.se} \\
  \And
  Tobias Rydén \\
  Lynx Asset Management AB \\
  Stockholm, Sweden \\
  \texttt{tobias.ryden@lynxhedge.se} \\
}

\begin{document}

\maketitle

\begin{abstract}
    We present a method for estimating sparse high-dimensional inverse
    covariance and partial correlation matrices, which exploits the connection
    between the inverse covariance matrix and linear regression. The method is
    a two-stage estimation method wherein each individual feature is regressed
    on all other features while positive semi-definiteness is enforced
    simultaneously. We derive non-asymptotic estimation rates for both inverse
    covariance and partial correlation matrix estimation. An efficient proximal
    splitting algorithm for numerically computing the estimate is also dervied.
    The effectiveness of the proposed method is demonstrated on both synthetic
    and real-world data. 
\end{abstract}

\section{Introduction}

Two important and closely related problems in statistical learning are the
problems of estimating a partial correlation network and the inverse covariance
matrix, also known as the precision matrix, from data. Partial correlation
networks, which generalize the Gaussian graphical model, are used to model the
relationships between variables when controlling for other variables. Partial
correlation networks are used in a plethora of applications, \eg, the analysis
of gene expression data \citep{delaFuente04}, quantitative psychology
\citep{Epskamp18}, and the analysis of financial markets \citep{Kenett15}. The
precision matrix, from which we can obtain the partial correlation network, is
also of interest in its own right, as it appears in linear discriminant
analysis \citep{Hastie09} and in Markowitz portfolio selection
\citep{Markowitz52}. However, due to the high-dimensionality of the problem,
estimating a precision or partial correlation matrix is often challenging as
the number of parameters is on the order of the squared number of features. For
this reason, classical methods such as using the inverse of the sample
covariance matrix, are known to perform poorly whenever the number of
observation is not extremely large. Additionally, classical methods produce
estimates which are almost surely dense. This makes regularization crucial,
since in many applications we typically only have a moderate number of
observations, and since we typically seek a sparse estimate which gives rise to
a more parsimonious and interpretable network model.

\paragraph{Related work.} Gaussian graphical models, and by extension sparse
precision matrix estimation, have been studied extensively in the literature,
with the \emph{covariance selection} problem originally being due to
\cite{Dempster72}. Today there are many methods for estimating the precision
matrix, which can be divided into three main categories: methods based upon
maximum likelihood estimation, methods based upon approximate inversion, and
methods based upon what we shall refer to as \emph{partial regression}. The
partial regression approach exploits a connection between the precision matrix
and linear regression, and is given its name due to its close relation to
partial correlations.

Perhaps the most popular method of estimation is $\ell_1$-regularized Gaussian
maximum likelihood estimation, commonly known as the graphical lasso. This
method was introduced independently by \cite{Yuan07} and \cite{dAspremont08}
(having slight variations), where the former work also provided a consistency
result in the high-dimensional setting. There has additionally been much work
on the problem of numerically solving the graphical lasso problem, such as the
work of \cite{Friedman08}, who also coined the name. A more recent addition to
methods in this vein is given by \cite{Bertsimas20}, which directly solves the
cardinality constrained maximum likelihood problem to optimality using
mixed-integer programming.

\cite{Cai11} provide a method of the second kind, \textsc{clime}, based on
minimizing the $\ell_1$-norm of the estimate under the constraint that the
estimate is sufficiently close to an inverse of the sample covariance matrix.
Robust extensions of this method are proposed by \cite{Zhao14} and
\cite{Wang17}, focusing on heavy-tailed and contaminated data, respectively.
The work of \cite{Cai16} provide an adaptive variant, \textsc{aclime}. They
also derive the minimax optimal rate in the $\ell_2$-operator norm, and show
that which their methods achieves it.

The present work is closely related to the works of \cite{Meinshausen06},
\cite{vandeGeer14}, and \cite{Yuan10}, who opt for estimation via partial
regression. \citeauthor{Meinshausen06} specifically consider the sparsity
pattern estimation problem in a high-dimensional setting, and use the lasso
\citep{Tibshirani96} to estimate the neighborhood of each node in the graphical
model. Their work provides consistency results of their proposed method, and
has been one of the most impactful works on the subject. The work of
\citeauthor{vandeGeer14} continues in the same vein and estimate the precision
matrix with the purpose of constructing confidence regions for high-dimensional
models. \citeauthor{Yuan10} provides a method that can be boiled down to
solving a sequence of linear programs, opting for the Dantzig selector
\citep{Candes07} in lieu of the lasso, and projecting the estimate onto the
space of symmetric matrices with respect to the operator $\ell_1$-norm. The
method is proven to  achieve the minimax optimal rate in the operator
$\ell_1$-norm over certain parameter spaces.

There are additionally methods that combine partial regression with
pseudo-likelihood to estimate the precision matrix. The first method in this
vein is the symmetric lasso procedure due to \cite{Friedman10}. Perhaps the
most recent contribution is given by \cite{Behdin23}, who pursue this approach
with discrete optimization, as they opt for direct penalization of the
cardinality, rather than its usual convex surrogate. \citeauthor{Behdin23} also
give a non-asymptotic error rate for their estimator.

Less work has been done on the problem of estimating high-dimensional partial
correlation matrices, despite their frequent use in data analysis.
\cite{Peng09} provide an alternating minimization method dubbed \textsc{space}
for estimating the partial correlation matrix via partial regression.
\cite{Khare15} also propose \textsc{concord}, a pseudo-likelihood partial
regression method which is similar to the symmetric lasso procedure.
\citeauthor{Peng09} and \citeauthor{Khare15} both provide asymptotic analysis
for their respective methods, achieving identical error in the limit up to a
constant.

\paragraph{Contributions.} The main contributions of the present work are three-fold,
and are as follows: 
\begin{itemize}

    \item We propose a method for simultaneously estimating a sparse precision
        and partial correlation matrix via linear regression, which we call
        \emph{joint partial regression}. For sub-Gaussian data, we provide
        non-asymptotic estimation error rates, which match the best known rate
        for precision matrix estimation and improve upon the best known
        asymptotic rate for partial correlation matrix estimation. To the best
        of our knowledge, our non-asymptotic result on partial correlation
        estimation is the first of its kind.

    \item We provide an efficient proximal algorithm for numerically fitting
        the proposed method which is well-suited for large-scale data sets. An
        implementation of the algorithm is also given in the form of a Python
        package written in Rust. 

    \item We conduct numerical studies on synthetic data, showing that the
        proposed method outperforms the graphical lasso almost uniformly over
        several precison matrix models and problem sizes. We also apply the
        method to stock market data, showing that it is able to produce
        interpretable results on real data.

\end{itemize}

\paragraph{Notation.} We write $\symm^p$ for the set of real symmetric $p\times
p$ matrices, and for $\Omega\in\symm^p$ we write $\Omega\succeq 0$  if $\Omega$
is positive semi-definite. If $\Omega$ and $\Omega'$ are both in $\symm^p$ we
write $\Omega\succeq\Omega'$ if $\Omega - \Omega' \succeq 0$. The Frobenius
norm is defined as $\|\Omega\|_{\rm F} = \sqrt{\Tr(\Omega^2)}$. The cardinality
function $\card$ returns the number of elements if given a set argument, and
the number of non-zero entries if given a vector argument. The proximal
operator of a closed, convex and proper function $f$ is denoted $\prox_f$.

\section{The precision matrix and partial correlation}\label{sec:precision-partial-correlation} 

Likely the most common use of the precision matrix is in the context of Gaussian
graphical models, where the connection between the precision matrix and the
conditional independence structure of jointly Gaussian random variables is
exploited. The basis of this connection is in fact a connection that the
precision matrix also has to linear regression. Suppose $Z$ is a
square-integrable mean-zero random vector, taking values in $\reals^p$, with
non-singular covariance $\Sigma$ and inverse $\Omega = \Sigma^{-1}$. Then the
best linear unbiased predictor of a component $Z_j$ given all other components
\[
    Z_{\setminus j} = (Z_1, \dots, Z_{j-1}, Z_{j+1}, \dots, Z_p)
\]
is $Z_{\setminus j}^\top \theta_j$, where
\[
    \theta_j = \Sigma_{\setminus j, \setminus j}^{-1} \Sigma_{\setminus j, j}.
\]
Here $\Sigma_{\setminus j, \setminus j}$ is the sub-matrix of $\Sigma$ obtained
by removing the $j$-th row and column, and $\Sigma_{\setminus j, j}$ is the $j$-th
column of $\Sigma$ with the $j$-th element removed. Likewise, the residual
$\varepsilon_j = Z_j - Z_{\setminus j}^\top \theta_j$ has mean zero and
variance $\tau_j^2 = \Sigma_{jj} - \Sigma_{j, \setminus j} \Sigma_{\setminus j,
\setminus j}^{-1} \Sigma_{\setminus j, j}$. Using Schur complements, we have
that
\[
    \begin{aligned}
      \Omega_{jj} &= (\Sigma_{jj} - \Sigma_{j, \setminus j} \Sigma_{\setminus j, \setminus j}^{-1} \Sigma_{\setminus j, j})^{-1}, \\
      \Omega_{\setminus j, j} &= -(\Sigma_{jj} - \Sigma_{j, \setminus j} \Sigma_{\setminus j, \setminus j}^{-1} \Sigma_{\setminus j, j})^{-1} \Sigma_{\setminus j, \setminus j}^{-1} \Sigma_{\setminus j, j},
    \end{aligned}
\]
and thus
\begin{equation}\label{eq:partial-regression}
    \begin{aligned}
      \Omega_{jj} &= 1 / \tau_j^2, \\
      \Omega_{\setminus j, j} &= -\theta_j / \tau_j^2. 
    \end{aligned}
\end{equation}
In the Gaussian case we find that $\varepsilon_j$ is independent of
$Z_{\setminus j}$. This then implies that $Z_j$ and $Z_k$ are
\emph{conditionally independent}, given $Z_{\setminus \{j, k\}}$ if and only if
$\theta_{jk} = 0$, \ie, $\Omega_{jk} = 0$. More generally, the \emph{partial
correlation coefficient} between $Z_j$ and $Z_k$ given $Z_{\setminus \{j, k\}}$
is given by the negative correlation between $\varepsilon_j$ and
$\varepsilon_k$, 
\[
    \rho_{jk \mid \setminus\{j, k\}} = -\corr(\varepsilon_j, \varepsilon_k) = -\frac{\E(\varepsilon_j \varepsilon_k)}{\sqrt{\E(\varepsilon_j^2) \E(\varepsilon_k^2)}},
\]
which can be written in terms of the precision matrix and the linear regression
coefficients as
\[
    \rho_{jk \mid \setminus\{j, k\}} = -\frac{\Omega_{jk}}{\sqrt{\Omega_{jj} \Omega_{kk}}} = \frac{\tau_k}{\tau_j} \theta_{jk},
\]
respectively. Thus we define the \emph{partial correlation matrix} $Q$ as
\[
    Q = - T \Omega T,
\]
where $T = \diag(\tau_1, \dots, \tau_p)$. (Note that some works \citep{Schäfer04,
Marrelec06} define the partial correlation matrix as $-T \Omega T + 2I$, so that
the diagonal elements are ones.)

\section{Proposed method}

In light of the connection between the precision matrix and linear regression
described in \S\ref{sec:precision-partial-correlation}, a way to estimate the
precision matrix is to form estimates of the linear regression coefficients
$\hat\theta_j$ and the residual variances $\hat \tau_j^2$ for each $j$ and then
compute an estimate of the precision matrix $\widehat{\Omega}$ via 
\[
    \begin{aligned}
        \widehat{\Omega}_{jj} &= 1 / \hat\tau_j^2, \\
        \widehat{\Omega}_{\setminus j, j} &= -\hat\theta_j / \hat\tau_j^2,
    \end{aligned}
\]
and an estimate of the partial correlation matrix $\widehat{Q}$ via
\[
    \widehat{Q} = -\widehat{T} \widehat{\Omega} \widehat{T},
\]
where $\widehat{T} = \diag(\hat\tau_1, \dots, \hat\tau_p)$. An obvious approach,
since we are interested in sparsity, is to use the lasso. Suppose we have access
to a data matrix $X \in \reals^{n \times p}$, where each row contains a sample
of the random vector $Z$ and each column corresponds to a feature of the data.
We can then estimate the linear regression coefficients by solving the lasso
problem
\[
    \hat \theta_j = \argmin_{\theta \in \mathbf{R}^{p-1}} \left\{\frac{1}{2n} \|X_j - X_{\setminus j} \theta\|_2^2 + \lambda \|\theta\|_1 \right\}
\]
and subsequently estimate the residual variances by $\hat \tau_j^2 = (1/n) \|X_j
- X_{\setminus j} \hat \theta_j\|_2^2$. However, this approach does not result
in a symmetric estimate of the precision matrix, let alone a positive
semi-definite one. If we are solely interested in the unweighted partial
correlation network, we could choose to include the edge $(j,k)$ only if
$\widehat{\Omega}_{jk}$ and $\widehat{\Omega}_{kj}$ are both non-zero or,
alternatively, if either is non-zero. If we are interested in the partial
correlations themselves we could choose to symmetrize the estimate by averaging
the estimates of $\widehat{\Omega}_{jk}$ and $\widehat{\Omega}_{kj}$, or by
projection onto the cone of positive semi-definite matrices in some norm. We
will however take a different approach, in which we enforce positive
semi-definiteness while simultaneously estimating the linear regression
coefficients, given initial estimates of the residual variances.

\subsection{Joint partial regression}
We propose the following convex program that regresses the features at the same
time as ensuring positive semi-definiteness,
\begin{equation}\label{eq:cvx-prob}
    \begin{array}{ll}
        \text{minimize} & \displaystyle \sum_{j=1}^{p} \left( \frac{1}{2n} \|X_j - X_{\setminus j} \theta_j\|_2^2 + \lambda \|\theta_j\|_1 \right)\\
        \text{subject to} & \Omega_{jj} = 1 / \hat\tau_j^2, \quad \Omega_{\setminus j, j} = -\theta_j / \hat\tau_j^2, \quad j = 1, \dots, p, \\
        & \Omega \succeq 0, \quad Q = -\widehat{T} \Omega \widehat{T}.
    \end{array}
\end{equation}
Here $\Omega\in\R^{p\times p}$ and $\theta_j\in\R^{p-1}$, $j=1,2,\dots,p$, are
the optimization variables, and the data are the matrix $X$, the estimates
$\hat\tau_j^2$ of the residual variances, and the regularization parameter
$\lambda \geq 0$. Note that the equality $Q = -\widehat{T}\Omega\widehat{T}$ is
not a real constraint as $Q$ does not appear in the objective function or other
constraints, but simply expresses that $Q$ is a by-product of the optimization
problem in the stated way. We call the solutions $\widehat{\Omega}$ and
$\widehat{Q}$ the \emph{joint partial regression} estimates of the precision
matrix and the partial correlation matrix, respectively. In order to obtain the
initial estimates of the residual variances we can use the aforementioned lasso
estimates of the linear regression coefficients to compute the sample variances
$\hat \tau_j^2 = (1/n) \|X_j - X_{\setminus j} \hat \theta_j\|_2^2$. We
summarize the proposed procedure in Algorithm~\ref{algo:jpr}.

Note moreover that applying the derivation in
\S\ref{sec:precision-partial-correlation} to the sample covariance matrix
reveals that Algorithm~\ref{algo:jpr} exactly retrieves the inverse sample
covariance matrix when $\lambda = 0$, if it exists.

\begin{algorithm}[ht]
    \SetAlgoLined 
    
    \textbf{Input:} {Data matrix $X \in \reals^{n \times p}$, penalty parameter
    $\lambda$.} \\

    \caption{\sc Joint partial regression}
    
    \BlankLine
    
    \textbf{for} {$j = 1, \dots, p$} \textbf{do} {
    \begin{quote} \it lasso regression
        \[
            \hat\theta_j = \argmin_{\theta \in \R^{p-1}} \left\{\frac{1}{2n} \|X_j - X_{\setminus j} \theta\|_2^2 + \lambda \|\theta\|_1\right\},
        \]
        and compute the estimate $\hat\tau_j^2 = (1/n) \|X_j - X_{\setminus
        j}\hat\theta_j\|_2^2$ of the residual variance.
    \end{quote}
    } \textbf{end}
    
    \textit{Solve the convex program \eqref{eq:cvx-prob} with the estimated residual
    variances $\hat\tau_j^2$ and regularization parameter $\lambda$ to obtain the
    estimates $\widehat{\Omega}$ and $\widehat{Q}$ of the precision matrix and
    partial correlation matrix, respectively.}
    \label{algo:jpr}
\end{algorithm}

\subsection{Proximal splitting algorithm}\label{sec:algorithm}

Consider a generalization to problem \eqref{eq:cvx-prob}, where we allow for
any convex and $L$-smooth loss $\ell$, a different regularization parameter
$\lambda_j$ for each feature $j$, and eigenvalue limits $\alpha$ and $\beta$:
\begin{equation}\label{eq:cvx-prob-general}
    \begin{array}{ll}
        \text{minimize} & \displaystyle f(\Omega) + g(\Omega) \\
        \text{subject to} & \alpha I \preceq \Omega \preceq \beta I
    \end{array}
\end{equation}
where
\[
    f(\Omega) = \sum_{j=1}^{p} \ell(X_j + \hat\tau_j^2X_{\setminus j} \Omega_{\setminus j , j})
\]
is the sum of partial regression losses, and
\[ 
    g(\Omega) = \sum_{j=1}^{p} \left(\lambda_j \hat\tau_j^2 \|\Omega_{\setminus j, j}\|_1 + \delta_{\{1/\hat\tau_j^2\}}(\Omega_{jj}) \right), 
\]
is the sum of $\ell_1$-regularization terms and indicator functions of the
diagonal elements. The generalized joint partial regression problem
\eqref{eq:cvx-prob-general} can be solved via the \textsc{pd3o} algorithm
\citep{Yan18}. In this instance, the \textsc{pd3o} algorithm has iteration
updates 
\[
    \begin{aligned}
        \Omega^{(k+1)} &= \Pi_{\mathcal{S}} \left(\Omega^{(k)} - \gamma U^{(k)} - \gamma \nabla f(\Omega^{(k)}) \right), \\
            U^{(k+1)} &= \prox_{\eta g^*}\left(U^{(k)} + \eta (2 \Omega^{(k+1)} - \Omega^{(k)}) + \gamma \eta (\nabla f(\Omega^{(k)}) - \nabla f(\Omega^{(k+1)})) \right).
    \end{aligned}
\]
where $\Pi_\mathcal{S}$ is the Euclidean projection onto the set $\mathcal{S} =
\{\Omega \in \symm^p \colon \alpha I \preceq \Omega \preceq \beta I\}$, and
$g^*$ is the Fenchel conjugate of $g$. The iterates $\Omega^{(k)}$ and
$U^{(k)}$ are guaranteed to converge to a primal and a dual solution of the
problem, respectively, if $\gamma$ and $\eta$ are chosen as positive step sizes
satisfying $\gamma < 2 / L$ and $\gamma \eta \leq 1$. We provide more details
on the computation of the primal and dual updates in
Appendix~\ref{appendix:computation}.

\paragraph{Termination criterion.} To stop the algorithm within a finite number
of steps, we terminate when the iterates are approximately stationary. That is,
given a tolerance $\epsilon^\text{tol} > 0$, the termination criterion is
\[
    \max \left(\|\Omega^{(k+1)} - \Omega^{(k)}\|_{\rm F}, \|U^{(k+1)} - U^{(k)}\|_{\rm F} \right) \leq \epsilon^\text{tol}.
\]

\paragraph{Computational complexity.} The computational cost of the projection
in the primal update is that of an eigenvalue decomposition, which is $O(p^3)$.
In the case of the quadratic loss the gradient computation is $O(p^3)$ if we
initially cache the matrix products, and $O(p^2n)$ otherwise. The computational
cost of the dual update is $O(p^2)$, meaning the total per iteration cost of
the algorithm is $O(p^3)$ when using the quadratic loss, matching the graphical
lasso's complexity. 


\paragraph{Practical considerations.} The initial lasso regressions in
Algorithm~\ref{algo:jpr} may give us several beneficial by-products. First,
forming a matrix via \eqref{eq:partial-regression} makes for an appropriate
initialization $\Omega^{(0)}$. Second, we may use cross-validation or
information criterions, such as the Akaike or Bayesian information criterions,
to select the regularization parameters before solving the optimization problem
\eqref{eq:cvx-prob-general}. In this way, we only need to run $p$ lasso
parameter selection procedures to select a parameter for each feature, which
would be prohibitively expensive if instead done on the final estimate level.
Moreover, another benefit of the partial regression approach is the we can
easily swap out the quadratic loss for another, \eg\ the Huber loss, in the
case of contaminated data.

\section{Theoretical results}\label{sec:theory}

In this section we provide theoretical guarantees for the estimation error of
the proposed method that hold with high probability under given assumptions.
Each data point, a random vector in $\reals^p$, is assumed to have a true,
unknown, covariance matrix that we denote by $\Sigma^\star$. The corresponding
true precision and partial correlation matrix is $\Omega^\star$ and $Q^\star$,
respectively. Our first assumption concerns how the dimensionality of the
problem grows. This first assumption specifies how fast $p$ may grow with $n$,
and also bounds the growth of the degree of the true partial correlation
network.

\begin{assumption}\label{assumption:dimensionality}    
The dimensionality is such that $p/n \leq 1 - \delta$ for some $\delta \in (0,
1)$, and the degree $d = \max_j \sum_{k\neq j} \mathbf{1}(\Omega_{jk}^\star
\neq 0)$ of the partial correlation network is such that $d\sqrt{\log
(p)/n}\leq M$ for some constant $M>0$.  
\end{assumption}

Note that the dependence on $n$ and $p$ is in many cases not made explicit;
\eg, in the above assumption both $\Omega^\star$ and $d$ tacitly depend on $n$
and $p$. We remark moreover that the term $d\sqrt{\log(p)/n}$ that appears in
the assumption is interesting in its own right. It is the minimax error rate in
the operator $\ell_q$-norm for $1 \leq q \leq \infty$ as shown by \cite{Cai16},
and it is the error rate of the graphical lasso in the operator $\ell_2$-norm
(the spectral norm), as shown by \cite{Ravikumar11}. Noting that the spectral
norm is smaller than the Frobenius norm, we argue that the boundedness
assumption is comparatively weak.

Our second assumption is that the data is \emph{sub-Gaussian}. A real-valued
random variable $Y$ is said to be sub-Gaussian if there exists a constant $C$
such that the tail probabilities satisfy 
\[
    \prob(|Y| \geq t) \leq 2 \exp(-t^2 / C^2), \quad \text{ for all } t \geq 0,
\] 
and a multivariate random variable $Z$ is said to be sub-Gaussian if every
linear combination $\braket{z, Z}$ with $z \in \reals^p$ is sub-Gaussian. We
define the \emph{sub-Gaussian norm} $\|\cdot\|_{\psi_2}$ for real-valued random
variables by
\[
    \|Y\|_{\psi_2} = \inf\{t > 0 \colon \E(\exp(Y^2 / t^2)) \leq 2\},
\]
and for multivariate random variables by $\|Z\|_{\psi_2} = \sup_{\|z\|_2 = 1}
\|\braket{z, Z}\|_{\psi_2}$. 


\begin{assumption}\label{assumption:sub-gaussianity}
The rows $X_i$ of the data matrix $X \in \reals^{n \times p}$ are $n$ i.i.d.\
samples drawn from a sub-Gaussian distribution with covariance matrix
$\Sigma^\star$ and sub-Gaussian norm $\|X_{i}\|_{\psi_2} \leq K$ for some
constant $K > 0$. 
\end{assumption}

It is clear from Assumption~\ref{assumption:sub-gaussianity} that each element
$X_{ij}$ is sub-Gaussian with $\|X_{ij}\|_{\psi_2} \leq K$. We remark however
that the assumption that the row-wise norms $\|X_i\|_{\psi_2}$ stay bounded as
$p$ grows does not imply that the element-wise norms $\|X_{ij}\|_{\psi_2}$ need
to go to zero. This is because the multivariate sub-Gaussian norm measures,
loosely speaking, the size of the largest eigenvalue of the covariance matrix
\cite[\cf][for the Gaussian case]{Vershynin18}. Thus, as long as this
eigenvalue stays bounded, which is part of our next assumption, there is
generelly no such tacit assumption on the $X_{ij}$.

Finally we assume that the eigenvalues of the true covariance matrix do not
degenerate to zero or infinity.

\begin{assumption}\label{assumption:bounded-eigenvalues}    
The eigenvalues of the precision matrix $\Omega^\star = (\Sigma^\star)^{-1}$
are uniformly bounded away from $0$ and $\infty$, \ie, there exists a constant
$\kappa \in (1,\infty)$ such that $1/\kappa \leq
\lambda_\mathrm{min}(\Omega^\star) \leq \lambda_\mathrm{max}(\Omega^\star) \leq
\kappa.$
\end{assumption}

Defining the size $s$ of the partial correlation network as the number of
non-zero off-diagonal elements of the precision matrix $\Omega^\star$,
\[
    s = \card\{(j,k) \colon \Omega^\star_{jk} \neq 0,\ j > k\},
\]
we can now state the main result of this section using the assumptions above.

\begin{thm}\label{thm:est-rate}
    Under Assumptions \ref{assumption:dimensionality},
    \ref{assumption:sub-gaussianity} and \ref{assumption:bounded-eigenvalues},
    there exist positive constants $c$, $C_1$ and $C_2$ such that
    Algorithm~\ref{algo:jpr} with $\lambda = c\sqrt{\log(p)/n}$ outputs an
    estimate $\widehat{Q}$ of the partial correlation matrix that satisfies
    \begin{equation}\label{eq:Q-estimation-error}
        \|\widehat{Q} - Q^\star\|_{\rm F} \leq C_1 \sqrt{\frac{s\log p}{n}}    
    \end{equation}
    and an estimate $\widehat{\Omega}$ of the precision matrix that satisfies
    \begin{equation}\label{eq:Omega-estimation-error}
        \|\widehat{\Omega} - \Omega^\star\|_{\rm F} \leq C_2 \sqrt{\frac{(s+p)\log p}{n}}
    \end{equation}
    with probability at least $1 - 6/p$.
\end{thm}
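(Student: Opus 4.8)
The plan is to reduce the two matrix errors to errors in the regression coefficients $\theta_j$ and the residual variances $\tau_j^2$, analyze the two stages of Algorithm~\ref{algo:jpr} in turn, and then reassemble everything through the exact identities \eqref{eq:partial-regression}. The key structural observation is that the diagonal of $\widehat{Q}$ is pinned: $\widehat{Q}_{jj}=-\hat\tau_j^2\widehat{\Omega}_{jj}=-\hat\tau_j^2(1/\hat\tau_j^2)=-1=Q^\star_{jj}$, so the diagonal error of $\widehat Q$ vanishes identically. This is exactly what lets the $\widehat{Q}$ rate depend on $s$ alone, whereas $\widehat{\Omega}$ carries the $p$ diagonal entries $\widehat{\Omega}_{jj}-\Omega^\star_{jj}=1/\hat\tau_j^2-1/(\tau_j^\star)^2$ and hence the extra $p$ in \eqref{eq:Omega-estimation-error}. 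A technical point at the outset is that $\Omega^\star$ is \emph{not} feasible for \eqref{eq:cvx-prob}, its diagonal being $1/(\tau_j^\star)^2$ rather than $1/\hat\tau_j^2$; I would therefore compare $\widehat\Omega$ against the surrogate $\Omega^{\star,\mathrm{adj}}$ obtained from $\Omega^\star$ by overwriting its diagonal with the entries $1/\hat\tau_j^2$. This surrogate remains positive semidefinite, hence feasible, once $\max_j|\hat\tau_j^2-(\tau_j^\star)^2|$ is small relative to $\lambda_{\min}(\Omega^\star)\ge 1/\kappa$, which holds under Assumption~\ref{assumption:bounded-eigenvalues}.

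Three interlocking ingredients then enter. First, the first-stage lasso is controlled per feature: under Assumptions~\ref{assumption:sub-gaussianity} and~\ref{assumption:bounded-eigenvalues}, and using that each $\theta_j^\star$ has at most $d$ nonzeros, a restricted-eigenvalue argument gives $\|\hat\theta_j-\theta_j^\star\|_2\lesssim\sqrt{s_j\log p/n}$ uniformly in $j$ (with $s_j\le d$ the support size of $\theta_j^\star$), and hence $|\hat\tau_j^2-(\tau_j^\star)^2|\lesssim\sqrt{\log p/n}$ after controlling the quadratic residual term. Summed over the $p$ diagonal entries this already yields the $\sqrt{p\log p/n}$ part of \eqref{eq:Omega-estimation-error}, and it pins down the weights $\hat\tau_j$ feeding the second stage. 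Second, a deviation bound $\max_j\|(1/n)X_{\setminus j}^\top\varepsilon_j^\star\|_\infty\lesssim\sqrt{\log p/n}$ (sub-Gaussian products, union bounded over the $O(p^2)$ entries), together with the $O(\hat\tau_j^2-(\tau_j^\star)^2)$ perturbation induced by the surrogate, justifies $\lambda=c\sqrt{\log p/n}$ and supplies the cone containment for the second-stage error. Third, the joint quadratic form is well-conditioned in population: expressing the loss through the normalized variables $X_k/\tau_k^\star$, its curvature along any symmetric $\Delta$ is $\sum_j(\tau_j^\star)^2\,\Delta_{\setminus j,j}^\top\Sigma^{Y}_{\setminus j,\setminus j}\Delta_{\setminus j,j}\ge\kappa^{-3}\|\Delta\|_{\rm F}^2$, where $\Sigma^{Y}=(T^\star)^{-1}\Sigma^\star(T^\star)^{-1}$ has smallest eigenvalue at least $\kappa^{-2}$; thus the population problem is strongly convex with an absolute constant, requiring no sparsity.

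The heart of the proof, and the main obstacle, is upgrading this population strong convexity to a finite-sample restricted strong convexity for the coupled, symmetry-constrained design. The symmetry constraint ties $\Omega_{jk}$ to $\Omega_{kj}$, so the second stage genuinely does not decouple across features, and the error $\widehat\Delta=\widehat\Omega-\Omega^{\star,\mathrm{adj}}$ obeys only a global cone condition. The danger is that a naive restricted-eigenvalue bound loses curvature through a tolerance term $\tfrac{\log p}{n}\|\widehat\Delta\|_1^2$ scaling with the total size $s$, which need not be small under Assumption~\ref{assumption:dimensionality}. The resolution I would pursue is to exploit that the true off-diagonal support is \emph{row-sparse} --- at most $d$ entries in every row --- so that the curvature degradation is governed by the degree $d$ rather than by $s$; since Assumption~\ref{assumption:dimensionality} gives $d\log p/n\le M\sqrt{\log p/n}\to 0$, the degraded curvature stays bounded below by a constant with high probability.

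Granting this, the basic inequality from $F(\widehat\Omega)\le F(\Omega^{\star,\mathrm{adj}})$, where $F$ denotes the objective of \eqref{eq:cvx-prob}, yields $\|\widehat\Omega-\Omega^{\star,\mathrm{adj}}\|_{\rm F}^2=\sum_j\|\widehat\Delta_{\setminus j,j}\|_2^2\lesssim\lambda^2\sum_j s_j=2s\lambda^2\lesssim s\log p/n$, a purely off-diagonal bound. Adding the first-stage diagonal contribution through the triangle inequality gives $\|\widehat\Omega-\Omega^\star\|_{\rm F}\lesssim\sqrt{s\log p/n}+\sqrt{p\log p/n}$, which is \eqref{eq:Omega-estimation-error}. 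For \eqref{eq:Q-estimation-error}, writing $\widehat Q-Q^\star=-\widehat T(\widehat\Omega-\Omega^\star)\widehat T-(\,\widehat T\Omega^\star\widehat T-T^\star\Omega^\star T^\star)$ and using that both terms are off-diagonal, that the weights $\hat\tau_j$ are bounded and close to $\tau_j^\star$, and that the second, variance-driven term is supported on the size-$s$ set $S$ with entries of order $\sqrt{\log p/n}$, gives $\|\widehat Q-Q^\star\|_{\rm F}\lesssim\sqrt{s\log p/n}$. A final union bound over the roughly six high-probability events --- sample-covariance entrywise deviation, the cross-term deviation, the first- and second-stage restricted-eigenvalue events, the residual-variance estimates, and feasibility of the surrogate --- accounts for the stated probability $1-6/p$.
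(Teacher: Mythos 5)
Your overall architecture coincides with the paper's: the first-stage lasso controls $\hat\theta_j$ and yields $\max_j|\hat\tau_j^2-\tau_j^2|\lesssim\sqrt{\log p/n}$ (the paper's Lemma~\ref{lemma:reg-param}); your surrogate $\Omega^{\star,\mathrm{adj}}$ is exactly the paper's intermediate matrix $\Omega'$, with feasibility checked by the same eigenvalue-perturbation argument; the basic inequality $F(\widehat\Omega)\le F(\Omega^{\star,\mathrm{adj}})$ plus H\"older and the support/$\ell_1$ inequality gives the off-diagonal $\sqrt{s}\,\lambda$ bound; and the $p$ diagonal variance errors supply the extra $\sqrt{p\log p/n}$ in \eqref{eq:Omega-estimation-error}. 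The genuine gap is precisely the step you yourself identify as the heart: upgrading population strong convexity to a finite-sample restricted strong convexity for the coupled, symmetry-constrained second stage. You assert a resolution via row-sparsity of degree $d$ but then proceed with ``Granting this,'' leaving the central curvature claim unproven --- and the symmetry coupling makes a per-row RE argument with a $\tfrac{\log p}{n}\|\widehat\Delta\|_1^2$ tolerance genuinely delicate. The paper never faces this problem, because Assumption~\ref{assumption:dimensionality} stipulates $p/n\le 1-\delta$: by sub-Gaussian singular-value concentration (Lemma~\ref{lemma:smallest-eigenvalue}, via Vershynin's Theorem~5.39), the event $\gamma_\ell\|\nu\|_2^2\le(1/n)\|X\nu\|_2^2\le\gamma_u\|\nu\|_2^2$ holds \emph{for all} $\nu\in\reals^p$ simultaneously with probability $1-2/p$, so curvature is global and no cone condition or RSC machinery is needed at all. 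Your detour is thus both unnecessary in this regime and, as written, a hole in the argument; the repair is simply to invoke the unrestricted eigenvalue bound.

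Your route to \eqref{eq:Q-estimation-error} is genuinely different from the paper's and essentially works. The paper observes that $Q^\star$ is \emph{exactly} feasible in \eqref{eq:cvx-prob} (the matrix $\widehat T^{-1}T\Omega^\star T\widehat T^{-1}$ is positive semi-definite with diagonal $1/\hat\tau_j^2$) and reruns the optimality argument in the $Q$ coordinates, which is why \eqref{eq:reg-param-ineq} carries the second branch involving $\max_k|\hat\tau_j/\hat\tau_k-\tau_j/\tau_k|$. You instead transfer the off-diagonal bound on $\widehat\Omega-\Omega'$ through $\widehat T(\cdot)\widehat T$ and add a variance-perturbation term supported on $S$; this gives the same $\sqrt{s\log p/n}$ rate, using that $\max_{jk}|\Omega^\star_{jk}|\le\kappa$ by Assumption~\ref{assumption:bounded-eigenvalues}. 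Two small corrections there: your claim that both terms in the decomposition $\widehat Q-Q^\star=-\widehat T(\widehat\Omega-\Omega^\star)\widehat T-(\widehat T\Omega^\star\widehat T-T\Omega^\star T)$ are off-diagonal is false --- each has diagonal $\mp(\hat\tau_j^2/\tau_j^2-1)$ and only their sum vanishes on the diagonal, so you should restrict to off-diagonal entries from the start (your own observation that $\widehat Q_{jj}=Q^\star_{jj}=-1$ licenses exactly this); and the paper's probability bookkeeping is two events ($2/p$ for the eigenvalue bound, $4/p$ from Lemma~\ref{lemma:reg-param}), not six.
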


The statistical rate of convergence \eqref{eq:Q-estimation-error} for the
partial correlation matrix estimate is an improvement over the asymptotic rates
of \textsc{space} \citep{Peng09} and \textsc{concord} \citep{Khare15}. Although
they are similarly $\sqrt{s} \lambda$, they require a choice of regularization
parameter $\lambda$ that is asymptotically larger than ours, unless $s$ is
bounded in which case they achieve a similar rate. The rate
\eqref{eq:Omega-estimation-error} for the precision matrix estimate matches
that of the graphical lasso proved by \cite{Rothman08}. To the best of our
knowledge, these are the best known Frobenius norm rates for partial
correlation and precision matrix estimation, respectively. We however
acknowledge that the theoretical analysis considers the case $p < n \ll p^2$,
and not the case when there are fewer samples than features which other works
have. 

An inspection of the proof of Theorem~\ref{thm:est-rate} reveals that we can in
fact obtain an explicit bound on the Frobenius error of the partial correlation
matrix using quantities that, apart from the size $s$, can be computed in
practice.

\begin{cor}
Under the assumptions of Theorem~\ref{thm:est-rate}, if $\lambda =
c\sqrt{\log(p)/n}$ with $c$ sufficiently large, the estimate $\widehat{Q}$
produced by Algorithm~\ref{algo:jpr} satisfies
\[
    \|\widehat{Q} - Q^\star\|_{\rm F} \leq \frac{4}{\lambda_\mathrm{min}(\widehat{\Sigma})}\left(\frac{\max_j \hat \tau_j}{\min_j \hat \tau_j}\right)^3 \sqrt{s}\lambda 
\]
with probability at least $1 - 6/p$, where $\widehat{\Sigma} = (1/n) X^\top X$
is the sample covariance matrix.
\end{cor}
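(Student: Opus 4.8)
The plan is not to reprove the rate from scratch but to re-run the argument behind \eqref{eq:Q-estimation-error} and halt one step before the constants are cleaned up. In proving Theorem~\ref{thm:est-rate} one necessarily passes through a data-dependent inequality of the form $\|\widehat Q - Q^\star\|_{\rm F} \le F\,\sqrt{s}\,\lambda$ on a good event of probability at least $1-6/p$, after which $F$ is bounded by an absolute constant (via Assumptions~\ref{assumption:dimensionality}--\ref{assumption:bounded-eigenvalues} and concentration of $\lambda_\mathrm{min}(\widehat\Sigma)$ and the $\hat\tau_j$) to yield the clean $C_1\sqrt{s\log p/n}$. The corollary is exactly the claim that this intermediate factor may be taken to be $F=\tfrac{4}{\lambda_\mathrm{min}(\widehat\Sigma)}(\max_j\hat\tau_j/\min_j\hat\tau_j)^3$, so the task reduces to exhibiting $F$ explicitly and verifying the bound on the same event.

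First I would reduce to per-column errors. Because the diagonal is pinned exactly --- $\widehat Q_{jj}=-\hat\tau_j^2\widehat\Omega_{jj}=-1=Q^\star_{jj}$ by the constraint $\widehat\Omega_{jj}=1/\hat\tau_j^2$ --- only off-diagonals matter, and $\|\widehat Q - Q^\star\|_{\rm F}^2=\sum_{j}\|\widehat Q_{\setminus j,j}-Q^\star_{\setminus j,j}\|_2^2$. Using the reparametrization of \S\ref{sec:precision-partial-correlation}, $\widehat Q_{\setminus j,j}=\hat\tau_j^{-1}\widehat T_{\setminus j}\hat\theta_j$, so each column error splits into a dominant term $\hat\tau_j^{-1}\widehat T_{\setminus j}(\hat\theta_j-\theta_j^\star)$, of $\ell_2$-size at most $(\max_k\hat\tau_k/\hat\tau_j)\|\hat\theta_j-\theta_j^\star\|_2$, and a remainder governed by $\hat\tau_j-\tau_j^\star$. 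This already contributes one factor $\max_j\hat\tau_j/\min_j\hat\tau_j$.

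Next I would bound $\|\hat\theta_j-\theta_j^\star\|_2$ with the restricted-eigenvalue/oracle machinery underlying the theorem. On the good event (which is where \emph{$c$ sufficiently large} is used, so that $\lambda$ dominates $\|X_{\setminus j}^\top\varepsilon_j\|_\infty/n$), Cauchy interlacing gives $\lambda_\mathrm{min}((1/n)X_{\setminus j}^\top X_{\setminus j})\ge\lambda_\mathrm{min}(\widehat\Sigma)$, so the compatibility constant is at least $\lambda_\mathrm{min}(\widehat\Sigma)$ and the lasso error is $O(\sqrt{s_j}\,\lambda/\lambda_\mathrm{min}(\widehat\Sigma))$, producing the factor $1/\lambda_\mathrm{min}(\widehat\Sigma)$. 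The remaining two powers of $\max_j\hat\tau_j/\min_j\hat\tau_j$ are accounting: the residual scales enter the joint loss $\ell(X_j+\hat\tau_j^2X_{\setminus j}\Omega_{\setminus j,j})$ through the factor $\hat\tau_j^2$ (hence the objective's curvature in the $\Omega$-coordinates), and they reappear in the conversion to $Q$; propagating these through the basic inequality and then summing the per-column bounds with $\sum_j s_j=2s$ accumulates the third power and the numerical constant $4$ (from the usual $2\lambda$ and cone factors), giving $\|\widehat Q-Q^\star\|_{\rm F}\le F\sqrt{s}\,\lambda$.

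The hard part is twofold and lives in the theorem's proof rather than the corollary. First, one must show the joint, PSD-constrained estimator obeys a lasso-type $\ell_2$ bound at all: the constraint $\Omega\succeq0$ ties the columns together through $\widehat\Omega_{jk}=\widehat\Omega_{kj}$, so the basic inequality must be applied to the coupled objective and the feasibility of a plug-in surrogate for $\Omega^\star$ (using $\hat\tau_j$ in place of $\tau_j^\star$) must be established. Second, one must check that replacing $\tau_j^\star$ by $\hat\tau_j$, both in the $\widehat Q$-formula and in the loss, perturbs only lower-order terms so that the leading factor is precisely $F$ and no larger. Granting these, the corollary follows immediately by declining to upper-bound $\lambda_\mathrm{min}(\widehat\Sigma)$ and the $\hat\tau_j$-ratios by their population limits.
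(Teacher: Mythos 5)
Your top-level plan is indeed the paper's: the proof of Theorem~\ref{thm:est-rate} ends with the data-dependent inequality $\|\widehat{Q} - Q^\star\|_{\rm F} \le \frac{4}{\gamma_\ell}\left(\frac{\tau_u}{\tau_\ell}\right)^3\sqrt{s}\,\lambda$ on the event of probability $1-6/p$, where $\gamma_\ell$ is any constant with $\gamma_\ell\|\nu\|_2^2 \le (1/n)\|X\nu\|_2^2$ for all $\nu$ and $\tau_\ell \le \hat\tau_j \le \tau_u$ are envelopes for the $\hat\tau_j$; the corollary follows by taking the sharpest admissible values $\gamma_\ell = \lambda_\mathrm{min}(\widehat{\Sigma})$, $\tau_u = \max_j\hat\tau_j$, $\tau_\ell = \min_j\hat\tau_j$. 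But the derivation you then sketch to reconstruct that intermediate inequality is not the paper's, and as written it would not deliver the stated bound. Splitting each column error into $\hat\tau_j^{-1}\widehat{T}_{\setminus j}(\hat\theta_j-\theta_j^\star)$ plus ``a remainder governed by $\hat\tau_j-\tau_j$'' produces an additive error term, so at best you would obtain $F\sqrt{s}\,\lambda$ \emph{plus} a remainder, not the exact inequality claimed. The paper avoids any remainder by working in the $Q$-parametrization throughout: it writes $X_j = X_{\setminus j}R_j q_j^\star + \epsilon_j$, applies a single basic inequality $H(\tilde\nu_1,\dots,\tilde\nu_p)\le H(0,\dots,0)$ to the coupled objective, and absorbs the whole perturbation $X_{\setminus j}(\widehat{R}_j-R_j)q_j^\star$ into the regularization level via the second branch of \eqref{eq:reg-param-ineq}, i.e.\ $\lambda$ is chosen (with $c$ large) to dominate $\frac1n\|X_{\setminus j}^\top\epsilon_j\|_\infty + \kappa\frac{\gamma_u}{\tau_\ell}\max_k|\hat\tau_j/\hat\tau_k-\tau_j/\tau_k|$. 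That absorption is precisely what yields the clean factor $4$ (from $\gamma_\ell/2$ on the left against $2\lambda$ on the right); the step you defer as ``the hard part'' is therefore the entire content of the argument, not an afterthought to be checked.

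Two further mismatches are worth naming. First, a per-column lasso/restricted-eigenvalue oracle bound on $\|\hat\theta_j-\theta_j^\star\|_2$ is never used for $\widehat{Q}$ and does not apply to the joint PSD-constrained estimator; the initial lasso enters only through the $\hat\tau_j$. Second, the comparator whose feasibility matters for the $Q$ bound is $Q^\star$ itself, which is \emph{unconditionally} feasible: in $\Omega$-coordinates it is $\widehat{T}^{-1}T\Omega^\star T\widehat{T}^{-1}$, positive semi-definite by congruence, with diagonal $(\tau_j^2/\hat\tau_j^2)\Omega^\star_{jj} = 1/\hat\tau_j^2$ exactly. The plug-in surrogate with $\hat\tau_j$ in place of $\tau_j$ whose feasibility you propose to verify is the matrix $\Omega'$ of \eqref{eq:omegaprime}, used only for the precision bound \eqref{eq:Omega-estimation-error}, and only its feasibility requires $(\log p)/n$ small; conflating the two obscures why the $Q$ bound has its clean form. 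Finally, the factor accounting is off: the cube arises as $(\tau_\ell/\tau_u)^2$ from lower-bounding the curvature terms $\|\widehat{R}_j\tilde\nu_j\|_2^2$ plus one power from $\|(\widehat{R}_j\tilde\nu_j)_{S_j}\|_1 \le (\tau_u/\tau_\ell)\|(\tilde\nu_j)_{S_j}\|_1$, and the sparsity enters through the global step $\|(\widehat{Q}-Q^\star)_S\|_1 \le \sqrt{s}\,\|\widehat{Q}-Q^\star\|_{\rm F}$, not through a $\sum_j s_j$ summation. (Your interlacing remark is correct but unnecessary, since $\gamma_\ell=\lambda_\mathrm{min}(\widehat\Sigma)$ already applies to vectors supported off coordinate $j$.)
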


\section{Numerical experiments} 

We study the performance of the proposed method and compare it to existing
estimation methods on synthetic data, and test the proposed method on real
stock market data. In \S\ref{sec:synthetic-data} we evaluate the performance of
the estimators on synthetic data from a known distribution. In
\S\ref{ssec:timings} we provide a comparison of computation times in
\textsc{cpu} seconds, and in \S\ref{ssec:stock-market-data} we estimate the
partial correlation network of the returns of stocks on the Nasdaq Stock
Market.

\subsection{Synthetic data}\label{sec:synthetic-data}
We begin by evaluating the performance of the joint partial regression method on
synthetic data for which the distribution and the true precision matrix are known.
We compare the performance of the proposed method with the graphical lasso,
as well as an oracle estimator which maximizes the Gaussian likelihood with the 
knowledge of the true sparsity pattern. 

\paragraph{Data generation.} For different numbers of features $p$ we generate
data sets of $n=500$ samples from a $p$-dimensional Gaussian distribution
$\mathcal{N}(0, \Sigma)$. We generate the precision matrix $\Omega =
\Sigma^{-1}$ according to three different models. In the first model we
generate an Erdös--Rényi network  with edge probability 0.05. That is, the
adjacency matrix $A \in \reals^{p\times p}$ is generated as
\[
    A_{jk} = A_{kj} = \begin{cases}
        0 & \text{with probability } 0.95,\\
        1 & \text{with probability } 0.05. \\
    \end{cases}
\]
In the second model we generate a simple path network, with the adjacency
matrix $A$ having ones on the secondary diagonals. We denote this model by
\emph{AR(1)}, since it corresponds to a first order autoregressive model. In
the third model we generate a hub network as in \cite{Peng09}, where one node
has a high degree of connectivity (connected to 20\% of all other nodes), and
the rest of the nodes have between one and three connections. Such a model is
interesting in the context of gene regulatory networks, where a single gene may
regulate a large number of other genes.

In all three models we generate the precision matrix by randomly flipping the
signs of the adjacency matrix and setting the diagonal elements of matrix to
$\Omega_{jj} = 1 + \sum_{k\neq j}|A_{jk}|$ to guarantee positive definiteness
via strict diagonal dominance. 

\paragraph{Regularization parameter selection.} We select the regularization
parameters for the graphical lasso and joint partial regression via 5-fold
cross-validation. For joint partial regression we use different regularization
parameters $\lambda_j$ for each feature, and select them separately via
cross-validation in the initial regression step, as described in
\S\ref{sec:algorithm}.

\paragraph{Results.} We compare the average Frobenius and operator
$\ell_2$-norm error of the three methods over 100 generated data sets. The
average Frobenius errors are shown in Figure~\ref{fig:fro-error}, and the
average operator $\ell_2$-norm errors are shown in Figure~\ref{fig:l2-error}.
The results show that the joint partial regression method outperforms the
graphical lasso almost uniformly across network models and problem sizes, with
Figure~\ref{fig:hub_l2} even showing joint partial regression as closer to the
oracle estimator than it is to the graphical lasso. 

\begin{figure}[ht]
    \centering
    \begin{subfigure}[t]{0.3\linewidth}
        \centering
        \includegraphics[width=\linewidth]{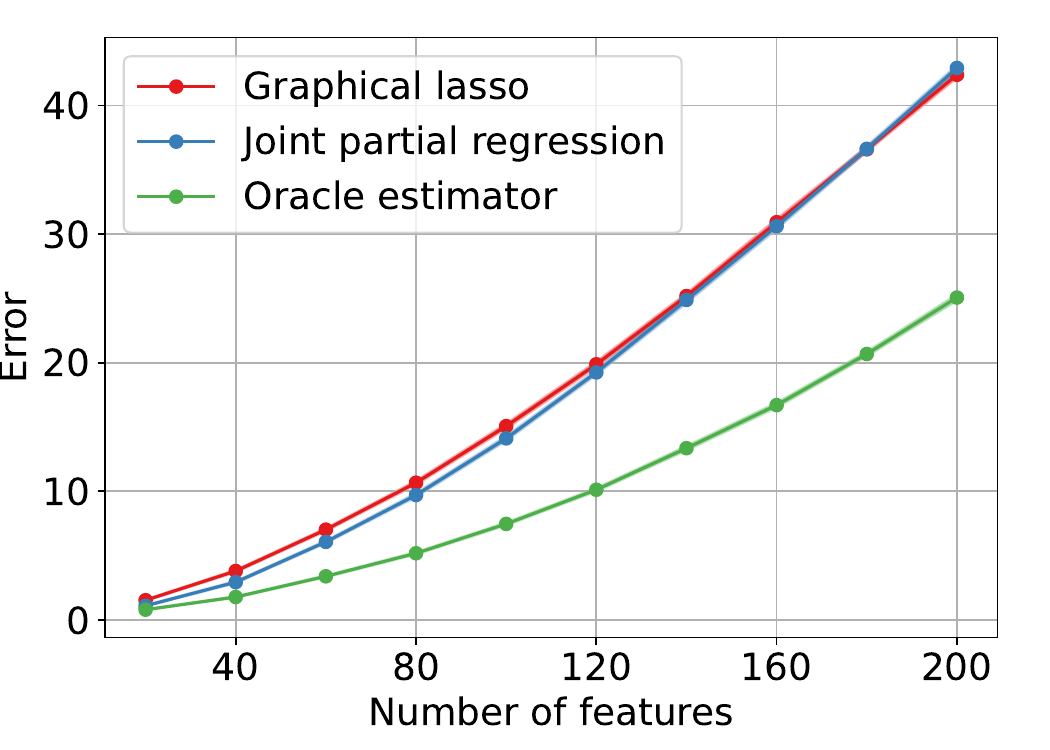}
        \caption{Erdös--Rényi model.}
        \label{fig:er_fro}
    \end{subfigure}
    \begin{subfigure}[t]{0.3\linewidth}
        \centering
        \includegraphics[width=\linewidth]{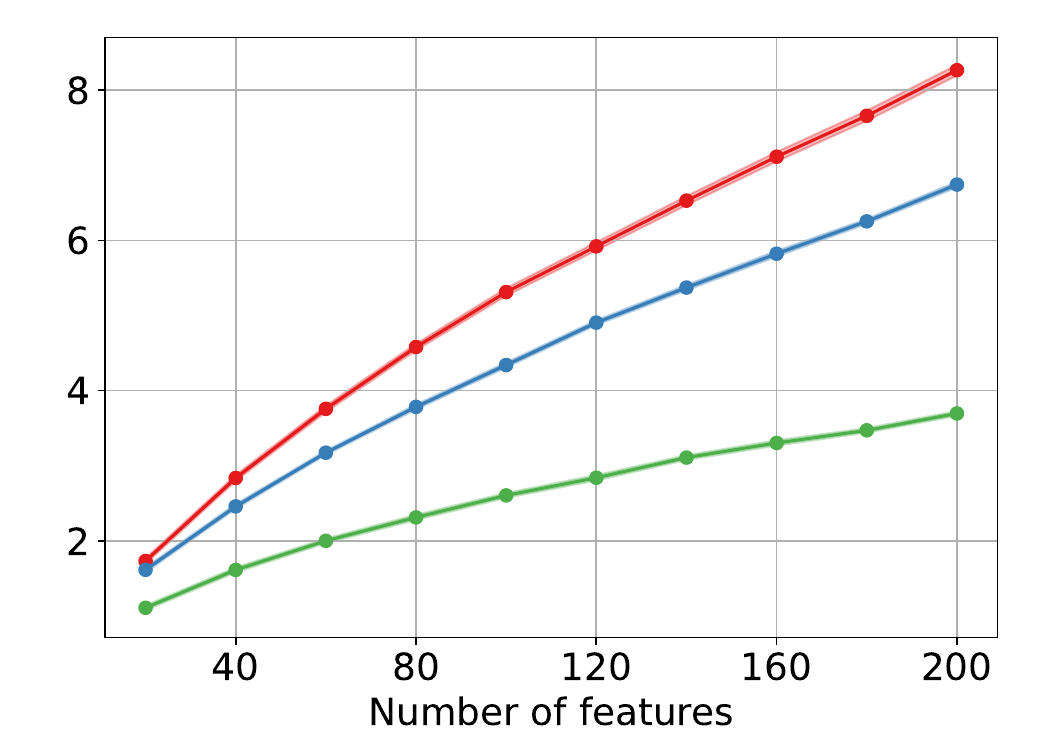}
        \caption{$\text{AR}(1)$ model.}
        \label{fig:ar_fro}
    \end{subfigure}
    \begin{subfigure}[t]{0.3\linewidth}
        \centering
        \includegraphics[width=\linewidth]{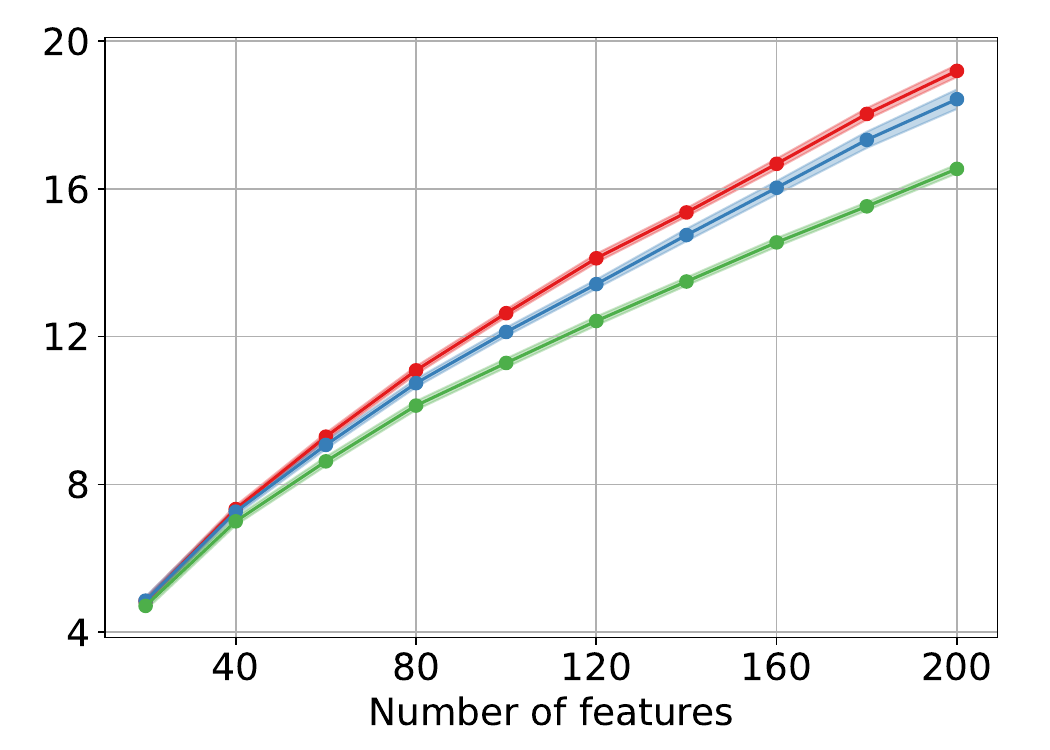}
        \caption{Hub network model.}
        \label{fig:hub_fro}
    \end{subfigure}
    \caption{Average Frobenius error versus number of features with $\pm 2
    \times \text{SE}$ bands.}
    \label{fig:fro-error}
\end{figure}

\begin{figure}[ht]
    \centering
    \begin{subfigure}[t]{0.3\linewidth}
        \centering
        \includegraphics[width=\linewidth]{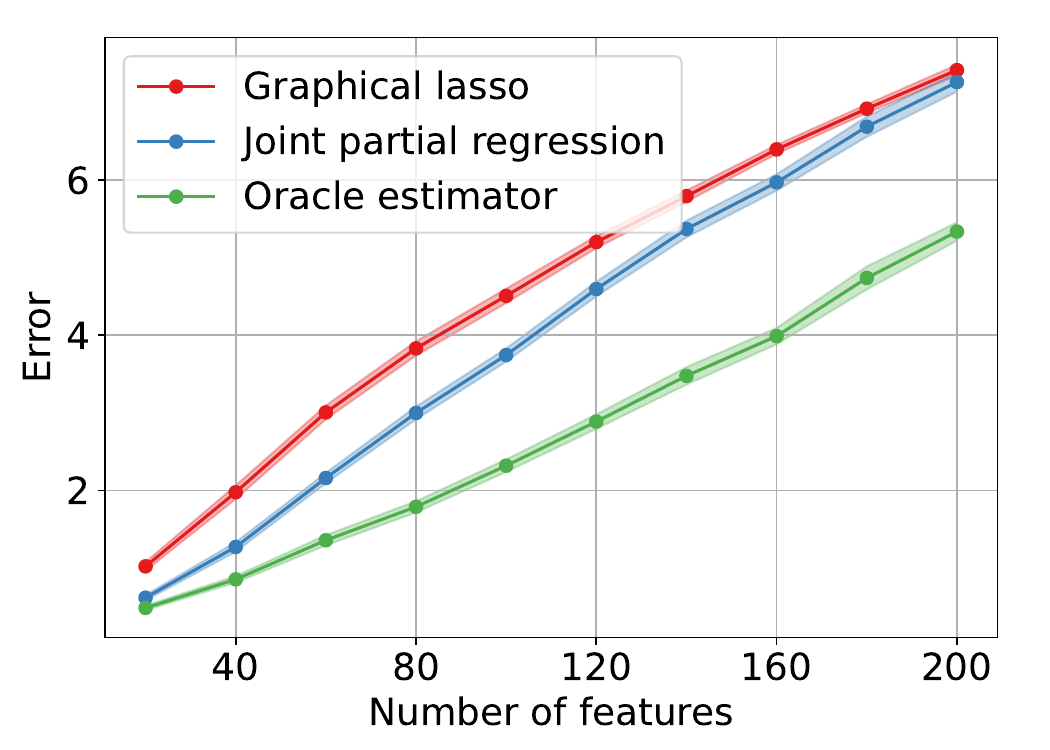}
        \caption{Erdös--Rényi model.}
        \label{fig:er_l2}
    \end{subfigure}
    \begin{subfigure}[t]{0.3\linewidth}
        \centering
        \includegraphics[width=\linewidth]{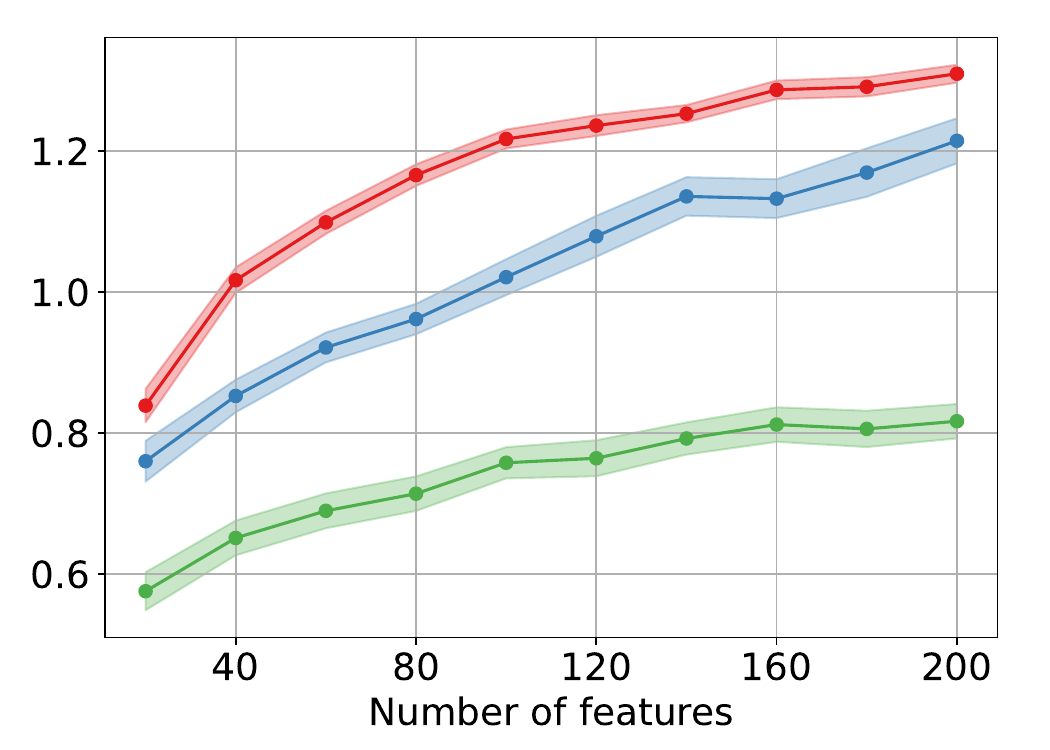}
        \caption{$\text{AR}(1)$ model.}
        \label{fig:ar_l2}
    \end{subfigure}
    \begin{subfigure}[t]{0.3\linewidth}
        \centering
        \includegraphics[width=\linewidth]{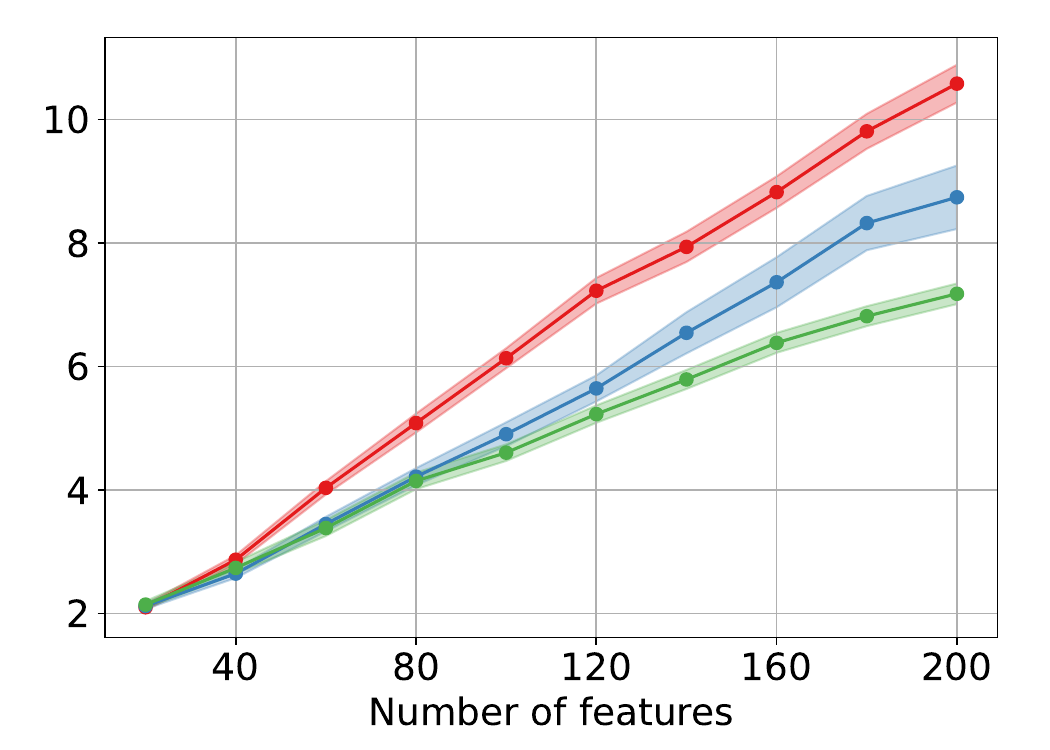}
        \caption{Hub network model.}
        \label{fig:hub_l2}
    \end{subfigure}
    \caption{Average operator $\ell_2$-error versus number of features with $\pm 2
    \times \text{SE}$ bands.}
    \label{fig:l2-error}
\end{figure}



\subsection{Timing comparisons}\label{ssec:timings} We compare the average
timings of our implementation of joint partial regression and an implementation
of the graphical lasso over $10$ datasets of $n=500$ Gaussian samples with
different numbers of features $p$. The timings are shown in
Table~\ref{tab:timings}. Both methods are run with $\lambda=10^{-2}$, which
gives them a similar level of sparsity. 

Joint partial regression is implemented in a Python package written in Rust,
using the proximal algorithm derived in \S\ref{sec:algorithm} along with
\textsc{fista} \citep{Beck09} for the initial regression step. The graphical
lasso is implemented in the Python package Scikit-learn \citep{sklearn}, and
uses the algorithm proposed by \cite{Friedman08} coupled with a coordinate
descent algorithm written in Cython. 

Note that the termination criteria of the methods are disanalogous, since the
graphical lasso uses a dual residual while joint partial regression uses the
difference of iterates. Hence we compare timings both when the methods are run
until the tolerance $\epsilon^\text{tol} = 10^{-3}$ and until the 100th
iteration is reached. All tests were performed on an Intel Core i9 3.2 GHz
processor.


\begin{table}[ht]
    \centering
    \caption{
        Average timings (\textsc{cpu} seconds) with $\pm$SE for joint
        partial regression and graphical lasso on ten random datasets of $p$
        features and $n=500$ observations. 
    }
    \label{tab:timings}

    \begin{subtable}{0.42\textwidth}
        \centering
        \subcaption{Termination after tolerance is reached.}
        \begin{tabularx}{\textwidth}{cXl}
            \toprule
            $p$ & Joint partial regression & Graphical lasso \\
            \midrule
            100 & \textbf{0.11 $\pm$ 0.024}  & 1.01  $\pm$ 0.227 \\
            200 & \textbf{0.64 $\pm$ 0.038}  & 5.91  $\pm$ 0.706 \\
            400 & \textbf{5.09 $\pm$ 0.091}  & 29.81 $\pm$ 3.737 \\
            \bottomrule
        \end{tabularx}
    \end{subtable} \hspace{0.5em}
    \begin{subtable}{0.35\textwidth}
        \centering
        \subcaption{Termination after 100 iterations.}
        \begin{tabularx}{\textwidth}{X l}
            \toprule
            Joint partial regression & Graphical lasso \\ 
            \midrule
            \textbf{0.31 $\pm$ 0.022} & 1.67  $\pm$ 0.278 \\
            \textbf{1.33 $\pm$ 0.095} & 7.29  $\pm$ 0.977 \\
            \textbf{6.98 $\pm$ 0.499} & 42.76 $\pm$ 1.916 \\
            \bottomrule
        \end{tabularx}
    \end{subtable}
\end{table}


\subsection{Stock market data}\label{ssec:stock-market-data} 

To illustrate the performance of the proposed method on real data, we estimate
the partial correlation network of stocks on the Nasdaq Stock Market. The data
\citep{Nugent18} consists of daily prices of the 500 highest market
capitalization stocks listed on stock exchanges in the United States. The data
spans from the 11th of February 2013 to the 7th of February 2018, and contains
1258 observations. We calculate the daily returns, and after removing stocks
with missing data and those without Nasdaq sector data, we are left with 399
assets. The network, estimated using the proposed method, is shown in
Figure~\ref{fig:asset-networks}. Here the nodes represent the stocks and the
color indicates the sector to which the stock belongs. We can clearly see
structure relating to sector belonging in the network. The four largest
connected components of the network are overwhelmingly comprised of stocks from
the finance (magenta), utilities (red), real estate (green), and energy
(orange) sectors, respectively.

\begin{figure}[ht]
    \centering
    \includegraphics[width=0.5\columnwidth]{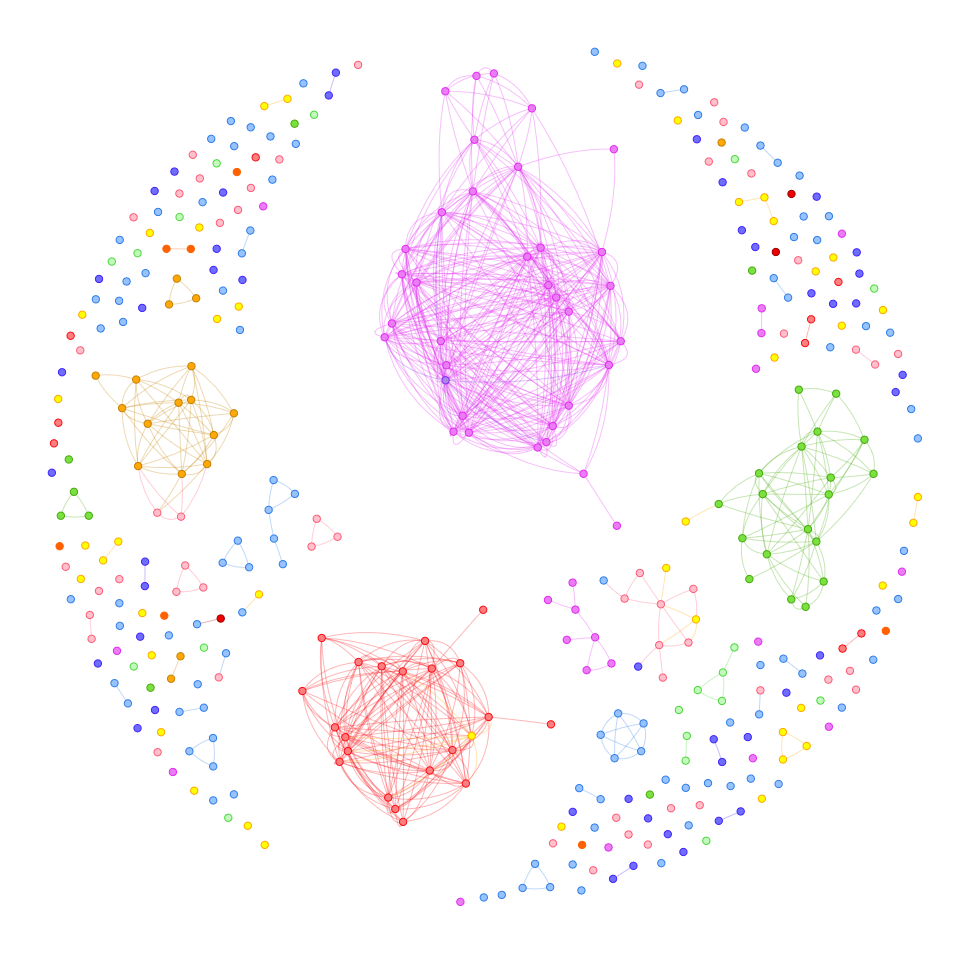}
    \caption{Stock market network as estimated by the proposed method.}
    \label{fig:asset-networks}
\end{figure}

\section{Discussion}

We propose a method for estimating high-dimensional sparse precision and
partial correlation matrices based on jointly regressing each feature on the
remaining features. Our theoretical analysis of partial correlation estimation
provides what is to the best of our knowledge the first non-asymptotic rate.
The method is shown to match the rate of the graphical lasso for precision
matrix estimation, and has an improved asymptotic error compared to
\textsc{space} and \textsc{concord}. We also derive an efficient algorithm for
solving the joint partial regression problem based on the \textsc{pd3o}
algorithm. The method was shown to perform well on both synthetic and real
data, producing interpretable results and outperforming the graphical lasso in
terms of estimation error.


\bibliography{references}

\appendix
\newpage

\section{Proof of Theoretical Results}

Define the $\reals^n$-valued random vector $\epsilon_j$ by the relation $X_j =
X_{\setminus j} \theta_j^\star + \epsilon_j$, and note that its elements are
independent zero mean random variables.

\subsection{Technical lemmas}

\begin{lem}\label{lemma:reg-param}    
Set $\theta_j^\star = -\tau_j^2 \Omega_{\setminus j, j}^\star$ and let
\[
    \hat \theta_j = \argmin_{\theta_j} \left\{\frac{1}{2n}\|X_j - X_{\setminus j} \theta_j\|_2^2 + \lambda \|\theta_j\|_1 \right\}
\]
be the lasso estimate of $\theta_j^\star$. Suppose that
Assumptions~\ref{assumption:dimensionality}, \ref{assumption:sub-gaussianity}
and \ref{assumption:bounded-eigenvalues} hold, the data matrix satisfies
\[
    \gamma_\ell \|\nu\|_2^2 \leq \frac{1}{n} \|X \nu\|_2^2 \leq \gamma_u \|\nu\|_2^2 \quad \text{for all } \nu \in \reals^p,
\]
for some $\gamma_\ell, \gamma_u \in (0, \infty)$, and $\lambda = c
\sqrt{(\log p) / n}$ for some sufficiently large constant $c > 0$. 

Then, with probability at least $1 - 4/p$, the estimates 
$\hat \tau_j^2 = (1/n) \|X_j - X_{\setminus j} \hat \theta_j\|_2^2$ 
of the residual variances satisfy
\begin{equation}\label{eq:var-estimation-error}
    \max_j |\hat \tau_j^2 - \tau_j^2| \lesssim \sqrt{\frac{\log p}{n}}
\end{equation}
and 
\begin{equation}\label{eq:var-estimates-bound}
    \tau_\ell \leq \hat \tau_j \leq \tau_u \quad \text{for all } j = 1, \dots, p,
\end{equation}
for some constants $\tau_\ell, \tau_u \in (0, \infty)$, and the regularization
parameter satisfies 
\begin{equation}\label{eq:reg-param-ineq}
    \begin{aligned}
        \lambda \geq \max_{1\leq j\leq p} & \left(\frac{1}{n}\|X_{\setminus j}^\top \epsilon_j\|_\infty + \gamma_u \kappa |\hat \tau_j^2 - \tau_j^2|\right) \\
        &\vee \max_{1\leq j\leq p} \left(\frac{1}{n}\|X_{\setminus j}^\top \epsilon_j\|_\infty 
     + \kappa \frac{\gamma_u}{\tau_\ell} \max_k \left|\frac{\hat \tau_j}{\hat \tau_k} - \frac{\tau_j}{\tau_k}\right| \right).       
    \end{aligned}
\end{equation}

\end{lem}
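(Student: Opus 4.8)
The plan is to derive the three assertions in sequence, building everything on two uniform concentration events and the \emph{assumed} eigenvalue condition (which is a hypothesis of the lemma and therefore costs no probability, so that the whole $1-4/p$ budget is spent on the two concentration events). First I would establish, via a Bernstein-type bound for sums of centered sub-exponential variables together with a union bound over the at most $p(p-1)$ coordinates, that on an event of probability at least $1-2/p$,
\[
    \max_{1\le j\le p}\frac{1}{n}\|X_{\setminus j}^\top \epsilon_j\|_\infty \lesssim \sqrt{\frac{\log p}{n}}.
\]
The key structural fact making each summand centered is that $\E(X_{ik}\epsilon_{ij})=0$ for $k\neq j$, since $\theta_j^\star$ is the population regression coefficient; under Assumption~\ref{assumption:sub-gaussianity} each product $X_{ik}\epsilon_{ij}$ is then a mean-zero sub-exponential variable. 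An identical argument applied to $\epsilon_{ij}^2-\tau_j^2$ gives, on a second event of probability at least $1-2/p$, the bound $\max_j|\tfrac1n\|\epsilon_j\|_2^2-\tau_j^2|\lesssim\sqrt{\log p/n}$. Choosing $c$ large enough that $\lambda$ exceeds twice the first maximum places us on the standard lasso good event, whereupon the assumed lower curvature $\gamma_\ell$ (playing the role of a restricted eigenvalue) yields, simultaneously over $j$, both $\|\hat\theta_j-\theta_j^\star\|_1\lesssim s_j\lambda/\gamma_\ell$ and the prediction bound $\tfrac1n\|X_{\setminus j}(\hat\theta_j-\theta_j^\star)\|_2^2\lesssim s_j\lambda^2/\gamma_\ell$, where $s_j\le d$ is the support size of $\theta_j^\star$.

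For \eqref{eq:var-estimation-error} I would expand
\[
    \hat\tau_j^2-\tau_j^2=\Big(\tfrac1n\|\epsilon_j\|_2^2-\tau_j^2\Big)-\tfrac2n\epsilon_j^\top X_{\setminus j}(\hat\theta_j-\theta_j^\star)+\tfrac1n\|X_{\setminus j}(\hat\theta_j-\theta_j^\star)\|_2^2.
\]
The first term is the second concentration event; the cross term is at most $\tfrac2n\|X_{\setminus j}^\top\epsilon_j\|_\infty\|\hat\theta_j-\theta_j^\star\|_1\lesssim s_j\lambda^2/\gamma_\ell$; and the last term is exactly the prediction bound. The crucial point is that Assumption~\ref{assumption:dimensionality} converts these contributions to the advertised order, since $s_j\lambda^2\le d\,\lambda^2 = c^2\,(d\sqrt{\log p/n})\sqrt{\log p/n}\le c^2M\sqrt{\log p/n}$. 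Taking the maximum over $j$ gives \eqref{eq:var-estimation-error}. The boundedness \eqref{eq:var-estimates-bound} then follows from Assumption~\ref{assumption:bounded-eigenvalues}: because $\tau_j^2=1/\Omega_{jj}^\star\in[1/\kappa,\kappa]$, and the error just controlled lies below $\tau_j^2/2$ in the sample-size regime under consideration, $\hat\tau_j^2$ is trapped in a fixed interval $[\tau_\ell^2,\tau_u^2]$ with $\tau_\ell,\tau_u\in(0,\infty)$.

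Finally, for \eqref{eq:reg-param-ineq} I would reduce both correction terms to $\max_j|\hat\tau_j^2-\tau_j^2|$. The first correction is already in this form. For the second I would use the factorization $\hat\tau_j/\hat\tau_k-\tau_j/\tau_k=(\hat\tau_j\tau_k-\tau_j\hat\tau_k)/(\hat\tau_k\tau_k)$ together with $|\hat\tau_j-\tau_j|=|\hat\tau_j^2-\tau_j^2|/(\hat\tau_j+\tau_j)$ and the boundedness \eqref{eq:var-estimates-bound} to obtain $|\hat\tau_j/\hat\tau_k-\tau_j/\tau_k|\lesssim\max_j|\hat\tau_j^2-\tau_j^2|\lesssim\sqrt{\log p/n}$. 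Since the leading $\tfrac1n\|X_{\setminus j}^\top\epsilon_j\|_\infty$ term in each maximum is also $\lesssim\sqrt{\log p/n}$, both bracketed expressions are bounded by a constant multiple of $\sqrt{\log p/n}$, and the inequality is secured once $\lambda=c\sqrt{\log p/n}$ is made to dominate this bound. Intersecting the two concentration events yields the overall probability $1-4/p$.

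The main obstacle I anticipate is not any single estimate but the \emph{uniform} bookkeeping across all $p$ regressions at the common scale $\sqrt{\log p/n}$, and in particular the delicate balancing of absolute constants in \eqref{eq:reg-param-ineq}. The subtlety is that $\lambda$ scales linearly in $c$, whereas the prediction-error contribution to $|\hat\tau_j^2-\tau_j^2|$ scales quadratically in $c$ (it grows like $c^2M\sqrt{\log p/n}$, reflecting that an overly large $\lambda$ shrinks $\hat\theta_j$ toward $0$ and biases $\hat\tau_j^2$ upward). Consequently $c$ cannot be taken arbitrarily large: it must be chosen above the lasso threshold yet small enough that $\lambda$ still dominates the $\gamma_u\kappa|\hat\tau_j^2-\tau_j^2|$ correction, which is feasible precisely because both sides of \eqref{eq:reg-param-ineq} share the order $\sqrt{\log p/n}$ and the combined constant $\gamma_u\kappa M/\gamma_\ell$ is controlled. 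Verifying that an admissible $c$ exists, rather than merely matching the rate, is where the care lies.
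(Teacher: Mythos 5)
Your proposal follows the paper's proof essentially step for step: the same two Bernstein-plus-union-bound concentration events (for $\max_j \frac{1}{n}\|X_{\setminus j}^\top \epsilon_j\|_\infty$ over $p(p-1)$ sub-exponential products $X_{ik}\epsilon_{ij}$, and for $\frac{1}{n}\|\epsilon_j\|_2^2-\tau_j^2$ over $p$ coordinates, each costing $2/p$), the same lasso oracle bounds under the assumed curvature $\gamma_\ell$, the same three-term expansion of $\hat\tau_j^2-\tau_j^2$ with Assumption~\ref{assumption:dimensionality} converting $s_j\lambda^2$ to order $\sqrt{(\log p)/n}$, and the same ratio factorization $\hat\tau_j/\hat\tau_k-\tau_j/\tau_k=(\hat\tau_j\tau_k-\tau_j\hat\tau_k)/(\hat\tau_k\tau_k)$ for \eqref{eq:reg-param-ineq}. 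Your closing observation that the prediction-error contribution to $|\hat\tau_j^2-\tau_j^2|$ scales as $c^2$ while $\lambda$ scales as $c$ --- so that $c$ must lie in a window rather than be taken arbitrarily large, feasible only because the constant $\gamma_u\kappa M/\gamma_\ell$ is controlled --- is a legitimate point of care that the paper itself glosses over with ``if $c$ is sufficiently large,'' but it refines rather than changes the route.
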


\begin{proof}
From \cite{Hastie15} \citep[see also][for a fuller
background]{Bickel09} we have that if $\lambda \geq (2/n) \|X_{\setminus j}^\top \epsilon_j\|_\infty$, then 
\begin{equation}\label{eq:lasso-error-1}
    \|\hat \theta_j - \theta_j^\star\|_2 \leq \frac{3}{\gamma_\ell} \sqrt{s_j} \lambda, \quad \|\hat \theta_j - \theta_j^\star\|_1 \leq \frac{12}{\gamma_\ell}  s_j \lambda,
\end{equation}
and 
\begin{equation}\label{eq:lasso-error-2}
    \frac{1}{n}\|X_{\setminus j}(\hat \theta - \theta^\star)\|_2^2 \leq \frac{9}{\gamma_\ell} s_j \lambda^2.
\end{equation}
where $s_j = \card(\theta_j)$. The first and third of these inequalities are
Equations~(11.14b) and (11.25b) respectively in \cite{Hastie15}, and
the second inequality follows from the first one and the bound $\|\hat \theta -
\theta^\star\|_1 \leq  4 s_j^{1/2} \|\hat \theta - \theta^\star\|_2$. This
latter bound in turns follows from \citeauthor{Hastie15}, the very end
of the proof of its Theorem~1.1, and its Lemma~11 (in their notation): $
\|\widehat{\nu}\|_1 =  \|\widehat{\nu}_S\|_1 +  \|\widehat{\nu}_{S^c}\|_1 \leq
\|\widehat{\nu}_S\|_1 +  3 \|\widehat{\nu}_{S}\|_1 = 4 \|\widehat{\nu}_{S}\|_1
\leq 4 k^{1/2} \|\widehat{\nu}\|_2$, with $\hat\nu$ and $k$ corresponding to
our $\hat\theta-\theta^\star$ and $s_j$ respectively.

Now note that $\epsilon_j$ and $X_k$ are uncorrelated for $k \neq j$, so
$\E(X_k^\top \epsilon_j) = 0$. Moreover, since $\epsilon_j = X_j - X_{\setminus
j} \theta_j$ we have
\[
    \|\epsilon_{ij}\|_{\psi_2} = \|X_{ij} - X_{i, \setminus j} \theta^\star_j\|_{\psi_2} = \tau_j^2 \|\braket{X_i, \Omega_j^\star}\|_{\psi_2} \leq \tau_j^2 \|\Omega_j^\star\|_2 K \leq \tau_j^2 \lambda_{\max}(\Omega^\star) K \leq \kappa^2 K 
\]
by Assumptions~\ref{assumption:sub-gaussianity} and
\ref{assumption:bounded-eigenvalues}. Note also that we used that $\tau_j^2 =
1/\Omega_{jj}$ are bounded due to the eigenvalue assumption. Specifically,
\begin{equation}\label{eq:boundedtau2} 
    1/\kappa \leq  \lambda_{\min}(\Omega^\star) \leq \Omega_{jj}^\star = 1/\tau_j^2
\end{equation}
for all $j$ and hence $\tau_j^2 \leq \kappa$. This implies
\citep[][Lemma~2.7.7]{Vershynin18} that $X_{ik} \epsilon_{ij}$ is
\emph{sub-exponential} for all $k\neq j$. As a sum of such random variables is
also sub-exponential, there exists a constant $C$ such that for every $t>0$,
\[
    \prob\left(\frac{1}{n}\left|\sum_{i=1}^{n} X_{ik} \epsilon_{ij}\right| \geq t\right) \leq 2 \exp(-t/C).
\]

The \emph{sub-exponential norm} of a random variable $Y$ is defined as
\[
    \|Y\|_{\psi_1} = \inf\left\{t > 0 \colon \E(\exp(Y/t)) \leq 2\right\}.
\]
By \citet[Lemma~2.7.7]{Vershynin18}, $\|X_{ik} \epsilon_{ij}\|_{\psi_1} \leq
\|X_{ik}\|_{\psi_2} \|\epsilon_{ij}\|_{\psi_2} \leq \kappa^2 K^2$. By
Bernstein's inequality \citep[Corollary~2.8.3]{Vershynin18},
\[
    \prob\left(\frac{2}{n}|X_k^\top \epsilon_j| \geq \lambda \right) \leq 2 \exp\left(-c_0\left(\frac{\lambda}{\kappa^2 K^2}\right)^2 n \right),
\]
whenever $(\log p) /n$ is sufficiently small and where $c_0$ is a universal
constant. Hence 
\[
    \begin{aligned}
        \prob\left(\lambda \leq \max_j \frac{2}{n}\|X_{\setminus j}^\top \epsilon_j\|_\infty\right) 
        &\leq \sum_{j=1}^{p} \prob\left(\frac{2}{n}\|X_{\setminus j}^\top \epsilon_j\|_\infty \geq \lambda\right) \\
        &\leq \sum_{j=1}^{p} \sum_{k\neq j} \prob\left(\frac{2}{n}|X_k^\top \epsilon_j| \geq \lambda\right) \\
        &\leq 2p(p-1) \exp\left(-c_0 n \frac{\lambda^2}{\kappa^4 K^4} \right) \\
        &= 2 \exp\left(-c_0 n \frac{\lambda^2}{\kappa^4 K^4} + \log p + \log(p - 1)\right).
    \end{aligned}
\]
Then, choosing $\lambda = c\sqrt{(\log p) /n}$ with $c>0$ sufficiently large
ensures that the inequality $\lambda \geq (2/n)\|X_{\setminus j}^\top
\epsilon_j\|_\infty$ holds for all $j$ with probability at least $1 - 2/p$.
Hence, with this choice of $\lambda$, the inequalities \eqref{eq:lasso-error-1}
and \eqref{eq:lasso-error-2} also hold with same probability.

Now assuming $\lambda \geq (2/n)\|X_{\setminus j}^\top \epsilon_j\|_\infty$, for the residual variances we have
\[
    \begin{aligned}
        \hat \tau_j^2 - \tau_j^2 &= \frac1n \|X_j - X_{\setminus j}\hat \theta_j \|_2^2 - \tau_j^2
      = \frac1n \| |X_{\setminus j} \theta_j+ \epsilon_j- X_{\setminus j}\hat \theta_j \|_2^2 - \tau_j^2 \\
 & = \frac{1}{n} \|X_{\setminus j}(\hat \theta_j - \theta_j)\|_2^2 + \frac{2}{n}\braket{\epsilon_j, X_{\setminus j}(\hat \theta_j - \theta_j)} 
      + \frac{1}{n}\|\epsilon_j\|_2^2 - \tau_j^2 \\
        &\leq \frac{9}{\gamma_\ell} s_j \lambda^2 + \frac{2}{n}\braket{\epsilon_j, X_{\setminus j}(\hat \theta_j - \theta_j)} 
     + \frac{1}{n}\|\epsilon_j\|_2^2 - \tau_j^2. 
    \end{aligned}
\]
where we used \eqref{eq:lasso-error-2}. For the scalar product, H\"older's inequality and \eqref{eq:lasso-error-1} yield
\[
    \frac{2}{n} |\braket{\epsilon_j, X_{\setminus j}(\hat \theta_j - \theta_j)}| 
    \leq \frac{2}{n}\|X_{\setminus j}^\top \epsilon_j\|_\infty \|\hat \theta_j - \theta_j\|_1 \leq \frac{12}{\gamma_\ell} s_j \lambda^2.
\]
Summing up, we obtain the bound
\begin{equation}\label{eq:var-bound}
    |\hat \tau_j^2 - \tau_j^2| \leq \frac{21}{\gamma_\ell} s_j \lambda^2 + \left|\frac{1}{n}\|\epsilon_j\|_2^2 - \tau_j^2\right| 
   = c^2\frac{21}{\gamma_\ell} \frac{s_j \log p}{n} + \left|\frac{1}{n}\|\epsilon_j\|_2^2 - \tau_j^2\right|,
\end{equation}
provided $\lambda \geq (2/n)\|X_{\setminus j}^\top \epsilon_j\|_\infty$.

The random variable $(1/n)\|\epsilon_j\|_2^2$ is sub-exponential with mean
$\tau_j^2$ and $\|\epsilon_{ij}\|_{\psi_1} \leq \kappa^2 K^2$, so another application of Bernstein's inequality yields
\[
    \prob\left(\left|\frac{1}{n}\|\epsilon_j\|_2^2 - \tau_j^2\right| \geq \frac{\kappa^2 K^2}{\sqrt{c_0 / 2}} \sqrt{\frac{\log p}{n}}\right) 
   \leq 2\exp\left(- 2 \log p\right).
\]
Thus
\begin{eqnarray*}
     \lefteqn{ \prob\left(\left|\frac{1}{n}\|\epsilon_j\|_2^2 - \tau_j^2\right| \geq \frac{\kappa^2 K^2}{\sqrt{c_0 / 2}} \sqrt{\frac{\log p}{n}} \text{ for some } j \right)  }
    \hspace*{20mm} \\
     &\leq & \sum_{j=1}^{p} 2\exp\left(- 2 \log p\right) 
        = 2p \exp(-2 \log p) = 2/p.
 \end{eqnarray*}

Due to Assumption~\ref{assumption:dimensionality}, we have $s_j(\log p) / n
\leq M \sqrt{(\log p)/n}$, hence the inequality $|\hat \tau_j^2 - \tau_j^2|
\lesssim \sqrt{(\log p) / n}$ holds for all $j$ with probability at least $1 -
4/p$; this is \eqref{eq:var-estimation-error}.

Now use this bound to obtain $\hat\tau_j^2 = \tau_j^2 + O(\sqrt{(\log p)/n})$,
and since $\lambda_\mathrm{min}(\Omega^\star) \leq 1 /\tau_j^2 \leq
\lambda_\mathrm{max}(\Omega^\star)$ (\cf\ inequality \eqref{eq:boundedtau2})
there are $\tau_\ell,\tau_u \in (0, \infty)$ such that $\tau_\ell \leq \hat
\tau_j \leq \tau_u$ for all $j\in \{1, \dots, p\}$ whenever $(\log p)/n$ is
small; this is \eqref{eq:var-estimates-bound}.

Finally, since $\lambda = c\sqrt{(\log p) / n} \geq (2/n) \|X_{\setminus j}^\top \epsilon_j\|_\infty$ for all $j$,  the regularization parameter
satisfies
\[
    \lambda \geq \max_j \left(\frac{1}{n}\|X_{\setminus j}^\top \epsilon_j\|_\infty + \gamma_u \kappa |\hat \tau_j^2 - \tau_j^2|\right)
\]
with the same probability if $c$ is sufficiently large. Likewise, because
\[
    \sqrt{\frac{\log p}{n}} \gtrsim |\hat \tau_j^2 - \tau_j^2| = |\hat \tau_j + \tau_j| |\hat \tau_j - \tau_j| \geq (\tau_\ell + \tau_j) |\hat \tau_j - \tau_j|,
\]
it holds that
\[
    \begin{aligned}
        \max_k \left|\frac{\hat \tau_j}{\hat \tau_k} - \frac{\tau_j}{\tau_k}\right| &= \max_k \left|\frac{\hat \tau_j \tau_k - \tau_j \hat \tau_k}{\hat \tau_k \tau_k}\right| \\
        &= \max_k \left|\frac{\left(\tau_j + O(\sqrt{(\log p)/n})\right) \tau_k - \tau_j \left(\tau_k + O(\sqrt{(\log p)/n})\right)}{\hat \tau_k \tau_k}\right| \\
        &\lesssim \sqrt{\frac{\log p}{n}}.
    \end{aligned}
\]
Thus, if $c$ is sufficiently large then the inequality
\[
    \lambda \geq \max_j \left(\frac{1}{n}\|X_{\setminus j}^\top \epsilon_j\|_\infty 
    + \kappa \frac{\gamma_u}{\tau_\ell} \max_k \left|\frac{\hat \tau_j}{\hat \tau_k} - \frac{\tau_j}{\tau_k}\right| \right)
\]
also holds with the same probability; this is \eqref{eq:reg-param-ineq}.
\end{proof}

\begin{lem}\label{lemma:smallest-eigenvalue} 
Under Assumptions~\ref{assumption:sub-gaussianity} and
\ref{assumption:bounded-eigenvalues}, the data matrix satisfies for any vector
$\nu \in \reals^p$
\[
    \lambda_\mathrm{min}(\Sigma)\left(1 - \sqrt{\frac{p + C \log p}{n}}\right)^2 \|\nu\|_2^2 \leq \frac{1}{n} \|X \nu\|_2^2 \leq \lambda_\mathrm{max}(\Sigma)\left(1 + \sqrt{\frac{p + C \log p}{n}}\right)^2 \|\nu\|_2^2,
\]
with probability at least $1 - 2/p$, where the constant $C>0$ depends only on
$\kappa$ and $K$. 
\end{lem}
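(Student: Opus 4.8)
The plan is to reduce the two-sided quadratic-form inequality to a bound on the extreme singular values of a matrix with isotropic sub-Gaussian rows, and then invoke a standard non-asymptotic singular-value estimate. Since Assumption~\ref{assumption:bounded-eigenvalues} places the eigenvalues of $\Sigma$ in $[1/\kappa,\kappa]$, the square roots $\Sigma^{1/2}$ and $\Sigma^{-1/2}$ exist, and I would whiten the data by setting $\widetilde X = X\Sigma^{-1/2}$, whose rows $\widetilde X_i = \Sigma^{-1/2} X_i$ are i.i.d., mean zero, and isotropic, as $\Cov(\widetilde X_i) = \Sigma^{-1/2}\Sigma\Sigma^{-1/2} = I$. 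With the change of variables $\mu = \Sigma^{1/2}\nu$ one has the exact identity $\frac{1}{n}\|X\nu\|_2^2 = \frac{1}{n}\|\widetilde X\mu\|_2^2$ together with the sandwiching $\lambda_\mathrm{min}(\Sigma)\|\nu\|_2^2 \leq \|\mu\|_2^2 \leq \lambda_\mathrm{max}(\Sigma)\|\nu\|_2^2$. Using the Rayleigh bounds $\frac{1}{n}s_\mathrm{min}(\widetilde X)^2\|\mu\|_2^2 \leq \frac{1}{n}\|\widetilde X\mu\|_2^2 \leq \frac{1}{n}s_\mathrm{max}(\widetilde X)^2\|\mu\|_2^2$, it then suffices to prove that $s_\mathrm{min}(\widetilde X)/\sqrt n \geq 1 - \sqrt{(p+C\log p)/n}$ and $s_\mathrm{max}(\widetilde X)/\sqrt n \leq 1 + \sqrt{(p+C\log p)/n}$ hold simultaneously with the stated probability.

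To verify the hypotheses of the singular-value bound, note that for any unit $z$ one has $\langle z, \widetilde X_i\rangle = \langle \Sigma^{-1/2} z, X_i\rangle$, whence $\|\langle z,\widetilde X_i\rangle\|_{\psi_2} \leq \|\Sigma^{-1/2}z\|_2\,\|X_i\|_{\psi_2} \leq \lambda_\mathrm{min}(\Sigma)^{-1/2} K \leq \sqrt{\kappa}\,K$ by Assumption~\ref{assumption:sub-gaussianity}; thus $\|\widetilde X_i\|_{\psi_2} \leq \sqrt\kappa K =: K'$, a quantity depending only on $\kappa$ and $K$. I would then apply the two-sided singular-value estimate for independent isotropic sub-Gaussian rows \citep[Theorem~4.6.1]{Vershynin18}, namely that for every $t \geq 0$,
\[
\sqrt n - c_0 (K')^2(\sqrt p + t) \leq s_\mathrm{min}(\widetilde X) \leq s_\mathrm{max}(\widetilde X) \leq \sqrt n + c_0 (K')^2(\sqrt p + t)
\]
holds with probability at least $1 - 2\exp(-t^2)$ for a universal constant $c_0$. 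Taking $t = \sqrt{\log p}$ makes the failure probability exactly $2/p$, matching the claimed $1 - 2/p$, and dividing through by $\sqrt n$ and substituting back into the reduction above reinstates the prefactors $\lambda_\mathrm{min}(\Sigma)$ and $\lambda_\mathrm{max}(\Sigma)$.

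I expect the main obstacle to lie in the bookkeeping of constants needed to land on the precise stated form. The estimate above produces a deviation of the form $c_0 (K')^2(\sqrt{p/n} + \sqrt{\log p/n})$, whereas the lemma expresses it as the single root $\sqrt{(p + C\log p)/n}$ carrying coefficient one on $p$; the two must be reconciled while keeping the coefficient of $p$ essentially unity, since Assumption~\ref{assumption:dimensionality} only guarantees $p/n \leq 1 - \delta$, and a coefficient exceeding one on $\sqrt{p/n}$ would render the lower factor $(1 - \sqrt{(p+C\log p)/n})^2$ vacuous as $p/n \to 1 - \delta$. Concretely, one wants the leading $\sqrt{p/n}$ to appear with (near) unit constant and the remaining $K'$-dependent and $\sqrt{\log p/n}$ contributions to be absorbed into $C\log p$ via sub- and superadditivity of the square root; this is precisely where the sharpness of the singular-value bound, as opposed to a generic operator-norm deviation, is essential, after which the rearrangement is elementary.
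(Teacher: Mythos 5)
Your proposal is correct and follows essentially the same route as the paper: the paper factorizes $X = Y\Lambda^{1/2}U^\top$ (your whitening $\widetilde X = X\Sigma^{-1/2}$ is the identical reduction), bounds the isotropic rows' sub-Gaussian norm by $\sqrt{\kappa}\,K$ exactly as you do, and applies the two-sided singular-value bound for isotropic sub-Gaussian rows (\citet[Theorem~5.39]{Vershynin12}, the counterpart of the Theorem~4.6.1 you cite) with $t \asymp \sqrt{\log p}$ to obtain the $1-2/p$ probability. The constant-bookkeeping obstacle you candidly flag---that the cited theorem yields a coefficient $c_0(K')^2$ on $\sqrt{p/n}$ rather than one, and that the excess cannot be absorbed into $C\log p$ when $p \asymp n$---is not resolved in the paper either, whose proof simply declares the application ``straightforward,'' so your attempt matches the published argument while being more honest that, as written, the invoked bound only justifies the lemma with a $(\kappa,K)$-dependent coefficient multiplying $p$ as well.
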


\begin{proof}
Let $\Sigma = U \Lambda U^\top$ be the eigenvalue decomposition of the
covariance matrix, \ie, $U$ is a matrix with orthonormal columns and $\Lambda$
is a diagonal matrix. Then $X = Y \Lambda^{1/2} U^\top$ where
$Y\in\reals^{n\times p}$ is a matrix with i.i.d., zero mean rows that are
sub-Gaussian and isotropic (have identity covariance matrix). Without loss of
generality, take any unit vector $\nu \in \reals^p$. Write $\tilde\nu = U^\top
\nu$, which is another unit vector. We have
\[
        \frac{1}{n}\|X \nu\|_2^2  
       = \frac{1}{n} \|Y \Lambda^{1/2} \tilde\nu \|_2^2 
        \geq \lambda_\mathrm{min}(\Lambda) \lambda_\mathrm{min}(Y^\top Y / n) = \lambda_\mathrm{min}(\Sigma) \lambda_\mathrm{min}(Y^\top Y / n).
\]
The upper bound
\[
    \frac{1}{n}\|X \nu\|_2^2 \leq \lambda_\mathrm{max}(\Sigma)\lambda_\mathrm{max}(Y^\top Y / n)
\]
follows similarly. 

As for the sub-Gaussian norm of each row $Y_i$ of $Y$, we have
\[
    \sup_{\|x\|_2 = 1} \|\braket{x, X_i}\|_{\psi_2} = \sup_{\|x\|_2 = 1} \|\braket{x, \Sigma^{1/2} Y_i}\|_{\psi_2} \geq \lambda_{\min}(\Sigma^{1/2}) \sup_{\|u\|_2 = 1} \|u^\top Y_i\|_{\psi_2}
\]
and hence $\|Y_i\|_{\psi_2} \leq \sqrt{\kappa} \|X_i\|_{\psi_2} \leq
\sqrt{\kappa} K$ by Assumptions~\ref{assumption:sub-gaussianity} and
\ref{assumption:bounded-eigenvalues}. A straightforward application of
\citet[Theorem~5.39]{Vershynin12} then gives the result.
\end{proof}

\subsection{Proof of Theorem~\ref{thm:est-rate}}
\begin{proof}
Note that under Assumption~\ref{assumption:dimensionality},
Lemma~\ref{lemma:smallest-eigenvalue} tells us that there exist $\gamma_\ell,
\gamma_u \in (0,\infty)$ such that $\gamma_\ell \|\nu\|_2^2 \leq (1/n)\|X
\nu\|_2^2 \leq \gamma_u \|\nu\|_2^2$ for all $\nu\in\reals^p$ with probability
$1 - 2/p$. Thus, by Lemma~\ref{lemma:reg-param}, inequalities
\eqref{eq:var-estimation-error} and \eqref{eq:reg-param-ineq} hold with
probability at least $1 - 6/p$. Throughout this proof, assume that this event
occurs.

For the first part of the proof we will split the estimation error $\widehat\Omega - \Omega^\star$ as
$(\widehat{\Omega} - \Omega') + (\Omega' - \Omega^\star)$ and bound the the two terms separately.
Here the intermediate matrix $\Omega'$ is defined by 
\begin{equation}\label{eq:omegaprime}
        \Omega_{jj}' = 1 / \hat \tau_j^2, \quad \Omega_{jk}' = \Omega_{jk}^\star, \quad j \neq k.
\end{equation}
For this matrix
\begin{equation}
    \begin{aligned}
        \lambda_\text{min}(\Omega') &= \lambda_\text{min}(\Omega - \diag(\Omega - \widehat{\Omega})) \\
                                &\geq \lambda_\text{min}(\Omega) - \max_{1\leq j \leq  p}|1 / \hat \tau_j^2 - 1 / \tau_j^2|,
    \end{aligned}
\end{equation}
so that by Assumption~\ref{assumption:bounded-eigenvalues} and
Lemma~\ref{lemma:reg-param}, $\lambda_\text{min}(\Omega') \geq 0$ whenever
$(\log p)/n$ is sufficiently small. This means that $\Omega'$ is feasible in the joint
partial regression problem \eqref{eq:cvx-prob}. 

To alleviate the notation, let $\omega^\star_j = \Omega_{\setminus j, j}^\star$.
Define the shifted variables $\nu_j =  \Omega _{\setminus j, j} - \omega^\star_j$, and
define $G$ as the objective function of \eqref{eq:cvx-prob} in these variables, \ie, 
\[
    G(\nu_1, \dots, \nu_p) = \sum_{j=1}^p \left(\frac{1}{2n} \|X_j + \hat\tau_j^2 X_{\setminus j}(\omega^\star_j + \nu_j) \|_2^2 + \lambda \hat\tau^2_j \|\omega^\star_j + \nu_j\|_1\right).
\]
Now recall that $X_j = -\tau_j^2 X_{\setminus j} \omega^\star_j + \epsilon_j$. Hence we can write
\begin{equation}\label{eq:objfunctionexpansion}
    \begin{aligned}
        G(\nu_1, \dots, \nu_p) 
            &= \sum_{j=1}^{p} \left(\frac{1}{2n}\|\epsilon_j + (\hat\tau_j^2 - \tau_j^2) X_{\setminus j} \omega^\star_j + \hat\tau_j^2 X_{\setminus j} \nu_j\|_2^2 
        + \lambda \hat\tau_j^2 \|\omega^\star_j  + \nu_j\|_1\right) \\
            &= \sum_{j=1}^{p} \bigg(\frac{1}{2n}\|\epsilon_j + (\hat\tau_j^2 - \tau_j^2) X_{\setminus j} \omega^\star_j\|_2^2 
              + \frac{\hat\tau_j^2}{n}\braket{\epsilon_j + (\hat\tau_j^2 - \tau_j^2) X_{\setminus j}\omega^\star_j, X_{\setminus j}\hat \nu_j} \\
            & \qquad + \frac{1}{2n}\|\hat \tau_j^2 X_{\setminus j}\nu_j\|_2^2 + \lambda \hat\tau_j^2 \|\omega^\star_j + \nu_j\|_1\bigg)
    \end{aligned}
\end{equation}
In particular,
\begin{equation}\label{eq:objfunctionexpansion0}
        G(0, \dots, 0) 
           = \sum_{j=1}^{p} \left(\frac{1}{2n} \|\epsilon_j + (\hat \tau_j^2 - \tau_j^2) X_{\setminus j} \omega^\star_j\|_2^2 + \lambda \hat\tau_j^2 \|\omega^\star_j\|_1\right).             
\end{equation}

Put $\hat \nu_j = \widehat{\Omega} _{\setminus j, j} - \omega^\star_j$. Then by construction we have the inequality 
\begin{equation}\label{eq:Ginequality}
G(\hat \nu_1, \dots, \hat \nu_p) \leq G(0, \dots, 0).
\end{equation}

Insert \eqref{eq:objfunctionexpansion} with $\nu_j = \hat\nu_j$ and \eqref{eq:objfunctionexpansion0} into  \eqref{eq:Ginequality}, 
rearrange and use $(1/n)\|X \nu\|_2^2 \geq \gamma_\ell \|\nu\|_2^2$ to obtain 
\begin{equation}\label{eq:proofheorembaseineq}
\begin{aligned}
        \sum_{j=1}^{p} & \frac12 \gamma_\ell \hat \tau_j^4 \|\hat \nu_j\|_2^2 \\
   & \leq \sum_{j=1}^{p} \left(\lambda \hat\tau_j^2 (\|\omega^\star_j\|_1 - \|\omega^\star_j 
    + \hat \nu_j\|_1) - \frac{\hat \tau_j^2}{n}\braket{\epsilon_j + (\hat \tau_j^2 - \tau_j^2) X_{\setminus j}\omega^\star_j, X_{\setminus j}\hat \nu_j} \right).
\end{aligned}
\end{equation}

Starting with the scalar product term on the right-hand side of this inequality,
Hölder's inequality, $\lambda_\text{max}((1/n) X^\top X) \leq \gamma_u$, $\lambda_\text{min}(\Sigma) \geq 1/\kappa$ 
and finally \eqref{eq:reg-param-ineq} yield
\begin{equation}\label{eq:scalarproductbound}
    \begin{aligned}
        \bigg| \frac{1}{n}\braket{\epsilon_j + (\hat \tau_j^2 - \tau_j^2) &  X_{\setminus j}\omega^\star_j, X_{\setminus j}\hat \nu_j}\bigg| 
      = \bigg|\frac{1}{n}\braket{\epsilon_j, X_{\setminus j} \hat \nu_j} 
          + (\hat \tau_j^2 - \tau_j^2)\frac{1}{n}\braket{X_{\setminus j}\omega^\star_j, X_{\setminus j}\hat \nu_j} \bigg|\\
            &\leq \frac{1}{n}\|X_{\setminus j}^\top \epsilon_j\|_\infty \|\hat \nu_j\|_1 
          + |\hat \tau_j^2 - \tau_j^2| \frac{1}{n} \|X_{\setminus j}^\top X_{\setminus j}\|_2 \|\omega^\star_j\|_2 \|\hat \nu_j\|_2 \\ 
            &\leq \frac{1}{n}\|X_{\setminus j}^\top \epsilon_j\|_\infty \|\hat \nu_j\|_1 + \gamma_u \kappa |\hat \tau_j^2 - \tau_j^2|  \|\hat \nu_j\|_1 \\
            &= \left(\frac{1}{n}\|X_{\setminus j}^\top \epsilon_j\|_\infty + \gamma_u \kappa|\hat \tau_j^2 - \tau_j^2| \right) \|\hat \nu_j\|_1 \\
           & \leq \lambda \|\hat \nu_j\|_1.
    \end{aligned}
\end{equation}

Now consider the norm terms in \eqref{eq:proofheorembaseineq}.
Let $S_j$ be the indices of the true non-zero elements of $\omega^\star_j$, and
$S_j^c$ be the indices of the zero elements. Then, using the reverse triangle ineqality we have
$$
    \|\omega^\star_j + \hat \nu_j\|_1 
  =     \|(\omega^\star_j + \hat \nu_j)_{S_j}\|_1  + \|(\hat \nu_j)_{S_j^c}\|_1 
   \geq 
   \|\omega^\star_j\|_1 - \|(\hat \nu_j)_{S_j}\|_1  + \|(\hat \nu_j)_{S_j^c}\|_1 ,
$$
and hence
\begin{equation}\label{eq:l1-ineq}
     \|\omega^\star_j\|_1 - \|\omega^\star_j + \hat \nu_j\|_1 \leq  \|(\hat \nu_j)_{S_j}\|_1  - \|(\hat \nu_j)_{S_j^c}\|_1 .
\end{equation}
Inequalities \eqref{eq:scalarproductbound} and \eqref{eq:l1-ineq} together with
\eqref{eq:proofheorembaseineq} show that 
\[
    \sum_{j=1}^{p} \frac12 \gamma_\ell \hat \tau_j^4 \|\hat \nu_j\|_2^2 
          \leq \sum_{j=1}^p    \lambda \hat \tau_j^2 (\| (\hat \nu_j)_{S_j}\|_1  - \|(\hat \nu_j)_{S_j^c}\|_1  +  \|\hat\nu_j\|_1 )
=
\sum_{j=1}^{p} 2 \lambda \hat\tau_j^2 \|(\hat \nu_j)_{S_j}\|_1. 
\]

Interpreting the sums over columns as matrix norms we have
\[
    \frac12 \gamma_\ell \min_j \hat \tau_j^4 \|\widehat{\Omega} - \Omega'\|_{\rm F}^2 
   \leq 2 \max_{j} \hat \tau_j^2 \lambda \|(\widehat{\Omega} - \Omega')_S\|_1 \leq 2 \max_{j} \hat \tau_j^2  \lambda \sqrt{s} \|\widehat{\Omega} - \Omega'\|_{\rm F},
\]
where $S$ are the indices of the true non-zero entries, and thus
\begin{equation}\label{eq:off-diag-bound}
    \|\widehat{\Omega} - \Omega'\|_{\rm F} \leq \frac{4}{\gamma_\ell} \frac{\max_j \hat\tau^2_j}{\min_j \hat\tau^4_j} \sqrt{s} \lambda.    
\end{equation}

By \eqref{eq:var-estimation-error}, $\hat\tau_j^2 = \tau_j^2 + O(\sqrt{(\log
p)/n})$, so since $\lambda_\mathrm{min}(\Omega^\star) \leq 1 / \tau_j^2 \leq
\lambda_\mathrm{max}(\Omega^\star)$ there exist $\tau_\ell,\tau_u \in (0,
\infty)$ such that $\tau_\ell \leq \hat \tau_j \leq \tau_u$ for all $j\in \{1,
\dots, p\}$ whenever $(\log p)/n$ is small. Hence $\max_j \hat\tau^2_j / \min_j
\hat\tau^4_j$ is bounded, so \eqref{eq:off-diag-bound} implies that
$\|\widehat{\Omega} - \Omega'\|_{\rm F} \lesssim \sqrt{s(\log p)/n}$. 

It now remains to show that $\Omega' - \Omega^\star$ is small. Recall
(Equation~\eqref{eq:omegaprime}) that these matrices agree on the off-diagonal.
On the diagonal we have
\[
 |\Omega'_{jj} - \Omega^\star_{jj}| =  |1/\hat \tau_j^2 - 1/\tau_j^2| \leq \frac{1}{\hat \tau_j^2 \tau_j^2} |\hat \tau_j^2 - \tau_j^2| 
     \leq \frac{\lambda_\mathrm{max}(\Omega^\star)}{\tau_\ell^2} |\hat \tau_j^2 - \tau_j^2| \lesssim \sqrt{\frac{\log p}{n}}.
\]
Hence
\[
    \|\widehat{\Omega} - \Omega^\star\|_{\rm F}^2 = \|\widehat{\Omega} - \Omega'\|_{\rm F}^2 + \sum_{j=1}^{p} (\Omega'_{jj} - \Omega^\star_{jj})^2 \lesssim \frac{(s + p)\log p}{n},
\]
which is the stated result for the precision matrix. 

We now turn to the partial correlation matrix. Similarly to as above, write
$q_j^\star = Q_{\setminus j, j}^\star$ and define the variables $\nu_j =
Q_{\setminus j, j} - q_j^\star$ as well as the function
\[
    H(\nu_1, \dots, \nu_p) = \sum_{j=1}^{p} \left(\frac{1}{2n} \|X_j - X_{\setminus j} \widehat{R}_j (q^\star_j + \nu_j)\|_2^2 
         + \lambda \|\widehat{R}_j(q^\star_j + \nu_j)\|_1\right),
\]
with $\widehat{R}_j = \diag(\hat \tau_j / \hat \tau_1, \dots, \hat \tau_j / \hat \tau_{j-1}, \hat \tau_j / \hat \tau_{j+1}, \dots, \hat \tau_j / \hat \tau_p)$. 

Proceeding like previously, we note that  $X_j = \tau_j^2 X_{\setminus j}
Q_{\setminus j,j}^\star + \epsilon_j = X_{\setminus j} R_j q_j^\ast +
\epsilon_j$ with $R_j = \diag(\tau_j / \tau_1, \dots, \tau_j / \tau_{j-1},
\tau_j / \tau_{j+1}, \dots, \tau_j / \tau_p)$, and then
\[
    \begin{aligned}
        H(\nu_1, \dots, \nu_p) 
            &= \sum_{j=1}^{p} \left(\frac{1}{2n}\|\epsilon_j - X_{\setminus j} (\widehat{R}_j - R_j) q^\star_j - X_{\setminus j}\widehat{R}_j \nu_j\|_2^2 
                             + \lambda \|\widehat{R}_j(q^\star_j + \nu_j)\|_1\right) \\
            &= \sum_{j=1}^{p} \bigg(\frac{1}{2n}\|\epsilon_j - X_{\setminus j} (\widehat{R}_j - R_j) q^\star_j\|_2^2 
                             - \frac{1}{n}\braket{\epsilon_j - X_{\setminus j} (\widehat{R}_j - R_j) q^\star_j, X_{\setminus j} \widehat{R}_j \nu_j} \\
            &\qquad + \frac{1}{2n}\| X_{\setminus j} \widehat{R}_j \nu_j\|_2^2 + \lambda \|\widehat{R}_j(q^\star_j + \nu_j)\|_1\bigg).
    \end{aligned}
\]
In particular,
\[
        H(0, \dots, 0)
                   = \sum_{j=1}^{p} \left(\frac{1}{2n} \|\epsilon_j - X_{\setminus j} (\widehat{R}_j - R_j) q^\star_j\|_2^2 + \lambda \|\widehat{R}_j q^\star_j\|_1\right)                .
\]

The true partial correlation matrix $Q^\star$ is always feasible in the joint
partial regression problem, so for $\tilde \nu_j = \widehat{Q}_{\setminus j, j}
- q_j^\star$ we have by definition $H(\tilde \nu_1, \dots, \tilde \nu_p) \leq
H(0,\dots, 0)$. Inserting the above expressions for  $H(\nu_1, \dots, \nu_p)$
with $\nu_j = \tilde\nu_j$ and $H(0, \dots, 0)$ respectively into this
inequality and rearranging terms, we arrive at
\[
 \begin{aligned}
        \sum_{j=1}^{p} \frac{\gamma_\ell}{2} & \|\widehat{R}_j \tilde \nu_j\|_2^2 \\
      & \leq \sum_{j=1}^{p} \Big(\lambda (\|\widehat{R}_j  q^\star_j\|_1 - \|\widehat{R}_j (q^\star_j + \tilde \nu_j)\|_1) 
        + \frac{1}{n}\braket{\epsilon_j - X_{\setminus j} (\widehat{R}_j - R_j) q^\star_j, X_{\setminus j} \widehat{R}_j\tilde \nu_j} \Big).        
    \end{aligned}
\]

Once again using Hölder's inequality we have
\[
    \begin{aligned}
        \bigg|\frac{1}{n}\braket{\epsilon_j - X_{\setminus j} (\widehat{R}_j - R_j)  q^\star_j, & X_{\setminus j} \widehat{R}_j\tilde \nu_j}\bigg| 
        = \left|\frac{1}{n}\braket{\epsilon_j, X_{\setminus j} \widehat{R}_j \tilde \nu_j} 
              - \frac{1}{n}\braket{X_{\setminus j} (\widehat{R}_j - R_j) q^\star_j, X_{\setminus j} \widehat{R}_j \tilde \nu_j} \right|\\
            &\leq \frac{1}{n}\|X_{\setminus j}^\top \epsilon_j\|_\infty \|\widehat{R}_j \tilde \nu_j\|_1 
                      + \frac{1}{n} \|X_{\setminus j} ^\top X_{\setminus j}  (\widehat{R}_j - R_j) q^\star_j\|_2 \|\widehat{R}_j \tilde \nu_j\|_2 \\ 
            &\leq \frac{1}{n}\|X_{\setminus j}^\top \epsilon_j\|_\infty \|\widehat{R}_j\tilde \nu_j\|_1 
                               + \gamma_u \lambda_\mathrm{max}(\widehat{R}_j - R_j) \|q^\star_j\|_2 \|\widehat{R}_j\tilde \nu_j\|_1  \\ 
            &\leq  \frac{1}{n}\|X_{\setminus j}^\top \epsilon_j\|_\infty \|\widehat{R}_j \tilde \nu_j\|_1 
                   + \kappa \frac{\gamma_u}{\tau_\ell} \max_k \left|\frac{\hat \tau_j}{\hat \tau_k} - \frac{\tau_j}{\tau_k}\right| \|\widehat{R}_j\tilde \nu_j\|_1  \\
            &=  \left(\frac{1}{n}\|X_{\setminus j}^\top \epsilon_j\|_\infty 
               + \kappa \frac{\gamma_u}{\tau_\ell} \max_k \left|\frac{\hat \tau_j}{\hat \tau_k} - \frac{\tau_j}{\tau_k}\right| \right) \|\widehat{R}_j \tilde \nu_j\|_1. 
    \end{aligned}
\]
The bracketed expression on the right-hand side above is bounded by \eqref{eq:reg-param-ineq}, so that
\[
    \sum_{j=1}^{p} \frac{\gamma_\ell}{2} \|\widehat{R}_j \tilde \nu_j\|_2^2
     \leq \lambda \sum_{j=1}^{p} (\| \widehat{R}_j  q^\star_j\|_1 - \| \widehat{R}_j  q^\star_j + \widehat{R}_j  \tilde \nu_j\|_1 + \|\widehat{R}_j \tilde \nu_j\|_1)
\]
Recall that $\widehat{R}_j$ is diagonal, bound the left-hand side above from below and adapt the inequality \eqref{eq:l1-ineq}
for the right-hand side to conclude that
\[
    \sum_{j=1}^{p} \frac{\gamma_\ell}{2} \left(\frac{\tau_\ell}{\tau_u}\right)^2 \|\tilde \nu_j\|_2^2 
          \leq 2 \lambda \sum_{j=1}^{p}\| (\widehat{R}_j \tilde \nu_j)_{S_j}\|_1
                    \leq 2 \lambda \frac{\tau_u}{\tau_\ell} \sum_{j=1}^{p}\|(\tilde \nu_j)_{S_j}\|_1.
\]

Rearranging terms and using $\sum_j \|\tilde \nu_j\|_2^2 = \|\widehat{Q} -
Q^\star\|_{\rm F}^2$ and $\sum_j \|\tilde \nu_j\|_1 = \|\widehat{Q} -
Q^\star\|_1$, we obtain
\[
    \|\widehat{Q} - Q^\star\|_{\rm F}^2 \leq \frac{4}{\gamma_\ell} \left(\frac{\tau_u}{\tau_\ell}\right)^3 \lambda \|(\widehat{Q} - Q^\star)_S\|_1 
      \leq \frac{4}{\gamma_\ell} \left(\frac{\tau_u}{\tau_\ell}\right)^3 \lambda \sqrt{s} \|\widehat{Q} - Q^\star\|_{\rm F}.  
\]
Inserting $\lambda = c\sqrt{(\log p)/n}$ and dividing by the Frobenius error, we
arrive at the bound
\[
    \|\widehat{Q} - Q^\star\|_{\rm F} \leq \frac{4c}{\gamma_\ell} \left(\frac{\tau_u}{\tau_\ell}\right)^3 \sqrt{\frac{s\log p}{n}}.
\]
This completes the proof. \end{proof}

\section{Computational details}
\label{appendix:computation}

\paragraph{Primal update.} The primal iterate update is a projection onto the
set $\mathcal{S}$,
\[
    \Omega^{(k+1)} = \Pi_{\mathcal{S}}\left(\Omega^{(k)} - \gamma U^{(k)} - \gamma \nabla f(\Omega^{(k)}) \right).
\]
Importantly, note that the matrix which is being projected is not in general
symmetric. The projection of a matrix $A \in \reals^{n \times n}$ onto 
$\mathcal{S}$ is by definition given by
\[
    \Pi_{\mathcal{S}}(A) = \argmin_{\Omega \in \mathcal{S}} \|\Omega - A\|_{\rm F}^2.
\]
Due to symmetry, for any $\Omega \in \mathcal{S}$ the squared distance 
$\|\Omega - A\|_{\rm F}^2$ can be written as 
\begin{equation}\label{eq:projection}
    \sum_{j=1}^{p}(\Omega_{jj} - A_{jj})^2  + 2\sum_{j=1}^{p}\sum_{k<j}^{p} (\Omega_{jk} - (A_{jk} + A_{kj}) / 2)^2 = \|\Omega - (A + A^\top) / 2\|_{\rm F}^2
\end{equation}
up to a term constant in $\Omega$. Let $Q \Lambda Q^\top = (A+
A^\top) / 2$ be the eigenvalue decomposition of the symmetric part of $A$. The
Frobenius norm is invariant under orthogonal transformations, so
\eqref{eq:projection} can equivalently be written as
\[
    \|Q^\top(\Omega - (A + A^\top) / 2)Q\|_{\rm F}^2 = \|Q^\top\Omega Q - \Lambda\|_{\rm F}^2.
\]
Clearly, this expression is minimized by choosing $\Omega$ such that $Q^\top
\Omega Q$ is diagonal, with the diagonal elements being the projection of the
eigenvalues of ($A + A^\top) / 2$ onto the interval $[\alpha, \beta]$. Thus the
projection $\Pi_{\mathcal{S}}(A)$ is given by projecting the symmetric part of
$A$, 
\begin{align*}
    \Pi_{\mathcal{S}}(A) &= Q \Lambda_{[\alpha, \beta]} Q^\top
\end{align*}
where $\Lambda_{[\alpha, \beta]}$ is a diagonal matrix with entries
$(\Lambda_{[\alpha, \beta]})_{jj} = \min(\max(\Lambda_{jj}, \alpha), \beta)$. 

The gradient that appears in the primal step is given by
\[
    \begin{aligned}
        \nabla f(\Omega)_{jj} &= 0 \\
        \nabla f(\Omega)_{\setminus j, j} &= \hat \tau_j^2 X_{\setminus j}^\top \nabla \ell(X_{j} + \hat\tau_j^2 X_{\setminus j} \Omega_{\setminus j, j}),
    \end{aligned}
\]
and depends on the loss function we choose. If we take the quadratic loss as in
\eqref{eq:cvx-prob}, then for $i=1, \dots, n$, 
\[
    \nabla \ell(z)_i = \frac{z_i}{n}.
\]
For the Huber loss
\[
    \ell(z) = \frac{1}{n} \sum_{i=1}^{n} \phi_\rho (z_i), \quad  \phi_\rho(r) = \begin{cases}
        r^2 / 2, & \text{if } |r| \leq \rho, \\
        \rho (|r| - \rho / 2), & \text{if } |r| > \rho,
    \end{cases}
\]
we have instead 
\[
    \nabla \ell(z)_i = \frac{1}{n} \begin{cases}
        z_i, & \text{if } |z_i| \leq \rho,\\
        \rho \sign(z_i), & \text{otherwise}.
    \end{cases}
\]
For two points $\Omega$ and $\widetilde{\Omega}$ in $\reals^{n\times n}$, with
either the quadratic or Huber loss, $f$ satisfies
\[
    \begin{aligned}
        \| \nabla f(\Omega) - \nabla f(\Omega')\|_{\rm F} 
    & \leq \frac{1}{n} \left(\sum_{j=1}^{p} \|\hat\tau_j^4 X_{\setminus j}^\top X_{\setminus j}(\Omega_{\setminus j, j} - \Omega'_{\setminus j, j})\|_2^2\right)^{1/2} \\
                                                        &\leq \frac{1}{n} \max_{1\leq j\leq p} \hat\tau_j^4 \lambda_\text{max}(X_{\setminus j}^\top X_{\setminus j}) \|\Omega - \Omega'\|_{\rm F}. 
    \end{aligned}
\]
So $f$ is $L$-smooth with $L = \max_{1\leq j\leq p} \hat\tau_j^4 \lambda_\text{max}(X_{\setminus j}^\top X_{\setminus j}) / n$.

\paragraph{Dual update.} By the Moreau identity, we can write the dual iterate
update with the proximal operator of $g / \eta$ instead of the Fenchel
conjugate $\eta g^*$,
\begin{align*}
    V^{(k+1)} &= U^{(k)} + \eta (2 \Omega^{(k+1)} - \Omega^{(k)}) + \gamma\eta(\nabla f(\Omega^{(k)}) - \nabla f(\Omega^{(k+1)})), \\
    U^{(k+1)} &= V^{(k+1)} - \eta \prox_{g / \eta}(V^{(k+1)} / \eta).
\end{align*}
The proximal operator of $g / \eta$ is given by 
\[
\begin{aligned}
    \prox_{g / \eta}(V^{(k+1)} / \eta)_{jj} &= 1 / \hat\tau_j^2, \\
    \prox_{g / \eta}(V^{(k+1)} / \eta)_{kj} &= \sign(V_{kj}^{(k+1)})(V_{kj}^{(k+1)} - \hat\tau_j^2 \lambda_j)_+ / \eta,
\end{aligned}
\]
for $j = 1, \dots, p$ and $k \neq j$.

\newpage

\end{document}